\newtheorem{theorem}{Theorem}[section]
\newtheorem{proposition}{Proposition}
\def\bbR{\mathbb{R}}
\def\dsp{\displaystyle}
\title{From Data to Uncertainty Sets: a Machine Learning Approach}
\author{
  Dimitris Bertsimas \\
  \small dbertsim@mit.edu
  \and
  Benjamin Boucher \\
  \small bboucher@mit.edu
}
\date{}
\begin{document}

\maketitle

\begin{abstract}
Existing approaches of prescriptive analytics---where inputs of an optimization model can be predicted by leveraging covariates in a machine learning model---often attempt to optimize the mean value of an uncertain objective. However, when applied to uncertain constraints, these methods rarely work because satisfying a crucial constraint in expectation may result in a high probability of violation. To remedy this, we leverage robust optimization to protect a constraint against the uncertainty of a machine learning model's output. To do so, we design an uncertainty set based on the model's loss function.
Intuitively, this approach attempts to minimize the uncertainty around a prediction. Extending guarantees from the robust optimization literature, we derive strong guarantees on the probability of violation. On synthetic computational experiments, our method requires uncertainty sets with radii up to one order of magnitude smaller than those of other approaches.
\end{abstract}

\onehalfspacing

\section{Introduction}

Optimization models often incorporate parameters that are estimated by a machine learning model, such as a cost, demand, or a duration. Formally, consider a constraint in an optimization problem:
\begin{equation} \label{deterministic_constraint}
    \dsp g \left( \bm{y}, \bm{x} \right) \leq 0,
\end{equation}

\noindent where $\dsp \bm{x}$ are the decision variables, $\dsp \bm{y}$ are the parameters of the model, and $\dsp g$ is a function mapping to the real numbers.

When covariates providing information about the model's parameters $\dsp \bm{y}$ are available, the modeler might wish to make use of them by training a machine learning model to estimate the value of $\dsp \bm{y}$ given a new observation of the covariates. However, these predictions are inherently uncertain---a key question is, therefore, how the modeler should go about treating this uncertainty.

One approach is to ignore the uncertainty entirely and use the predictions from the machine learning model as ground truth in the optimization stage. This two-staged approach, which Elmachtoub and Grigas (2022) \cite{elmachtoub2022smart} dub ``predict-then-optimize,'' fails to take into account the uncertainty surrounding the predictions of the model.

Many approaches have tackled the case of an uncertain objective function with a focus on minimizing the expected objective value. Elmachtoub and Grigas (2022) \cite{elmachtoub2022smart}, for example, propose to train models with a loss function minimizing the suboptimality of the downstream optimization model. They focus their study on the prediction of a linear objective cost vector. Furthermore, a core component of their approach is to have a single model predicting $\dsp \bm{y}$, which makes training larger models complicated. In contrast, Bertsimas and Kallus (2020) \cite{bertsimas2020predictive} propose to use machine learning models to describe new samples as a weighted sum of training observations. This method does not suffer from the scaling issues from the previous technique but coincides with the predict-then-optimize approach in the case of a linear objective.
Notice that neither approach extends to the more general setting of uncertainty embedded in a constraint. Indeed, a constraint satisfied in expectation could be frequently violated.

The most common way to protect an optimization problem against the uncertainty of a parameter is to leverage robust optimization, which proposes to solve constraint (\ref{deterministic_constraint}) for a set $\dsp \mathcal{U}$ of values for $\dsp \bm{y}$:
\begin{equation}
    \label{constraint}
    \dsp g \left( \bm{y}, \bm{x} \right) \leq 0, \, \forall \bm{y} \in \mathcal{U}.
\end{equation}

A variety of methods have been employed to build $\dsp \mathcal{U}$ effectively in the context of available covariates. Sun et al. (2023) \cite{sun2023predict} suppose having access to a good prediction model, whose residuals are then used to train a model to calibrate and construct box and ellipsoidal uncertainty sets. Tulabandhula and Rudin (2014) \cite{tulabandhula2014robust} propose a two-stage approach: first build an uncertainty set to protect against the difference between the current model and the theoretically best model, then build a set to protect against the predictions of the best model. Both of these approaches obtain probabilistic guarantees by leveraging the notion of Rademacher complexity from the statistical learning literature and therefore require a vast pool of validation data to draw any significant guarantees. As an illustrative example, to obtain guarantees that the constraint holds 90\% of the time for a new observation, Sun et al. (2023) require having access to at least 240,000 validation data points, while necessitating a large radius for their box uncertainty set. A modeler might prefer guarantees that are more practical when additional hypotheses hold, such as independence of the uncertain parameters given the covariates.

\subsection{Contributions}

In this paper, we propose a new uncertainty set informed by supervised machine learning models. Our contributions are as follows:

\begin{enumerate}
  \item We motivate and introduce a new uncertainty set that incorporates the loss function of the trained machine learning model as a measure of the uncertainty around the predictions.
  \item We connect our uncertainty set with uncertainty quantification from the deep learning literature (Nix and Weigend (1994) \cite{nix1994estimating}, Bishop (1994) \cite{bishop1994mixture} , and Skafte et al. (2019) \cite{skafte2019reliable}). Remarkably, we show that our uncertainty sets can be viewed as a generalization of the ellipsoidal uncertainty set proposed by Ben-Tal (2000) \cite{ben2000robust} to the context of covariate-driven robust optimization.
  \item We demonstrate that our uncertainty set can be solved tractably when the constraint function $\dsp g$ is concave in the uncertainty.
  \item We provide guarantees for the probability of violation in general. Additionally, we provide improved guarantees for the most common loss function when the components of the uncertainty conditioned on the covariates are independent.
  \item We run extensive experiments on synthetic data, and show that our method requires uncertainty sets with radii up to one order of magnitude smaller than those of other approaches to obtain the same probabilistic guarantees.
\end{enumerate}

The structure of the paper is as follows. In Section \ref{sec:using_loss_functions}, we motivate our uncertainty set by incorporating the loss function of a machine learning model, and explore how to solve a constraint leveraging the set. In Section \ref{sec:probabilistic_guarantees}, we develop probabilistic guarantees that ensure our uncertainty sets will avoid out-of-sample violations of the constraint, up to a desired threshold. We then provide comprehensive computational experiments in Section \ref{sec:computational_experiments}.

\subsection{Notations and Definitions}

Boldfaced letters $\displaystyle \bm{x}$ denote vectors and matrices while ordinary lowercase letters $\displaystyle x$ denote scalars. We add a tilde to denote a random vector $\displaystyle \Tilde{\bm{x}}$ and a random variable $\displaystyle \Tilde{x}$. The $\displaystyle i$\textsuperscript{th} coordinate of the vector $\displaystyle \bm{x}$ is $\displaystyle x_{i}$.

Let $\displaystyle n \in \mathbb{N}^{*}$ be the dimension of our decision variables $\displaystyle \bm{x}$ and $\displaystyle m \in \mathbb{N}^{*}$ be the dimension of our uncertain parameters. Denote by $\dsp \bm{y} \in \mathcal{Y} \subset \bbR^{m}$ the uncertain parameters which belong to a set $\dsp \mathcal{Y}$, and $\dsp g: \bbR^{m} \times \bbR^{n} \to \bbR$ be the function used in the constraint of an optimization formulation like (\ref{constraint}). We write $\dsp [m] = \left\{ 1, \, 2, \, \cdots, \, m \right\}$. We note for two vectors $\left(\bm{a}, \bm{b} \right) \in \bbR^{m} \times \bbR^{m}, \bm{a} \cdot \bm{b} = \left( a_{i} b_{i} \right)_{i \in [m]}$ the vector composed of their entry-wise multiplications and $\dsp \frac{\bm{a}}{\bm{b}} = \left( \frac{a_{i}}{b_{i}} \right)_{i \in [m]}$ the vector composed of their entry-wise divisions, when $\dsp \bm{b} > 0$.

Let $\dsp \mathcal{X} \subset \bbR^{p}$ be the space of covariates which provide information about the parameter $\dsp \bm{y} \in \mathcal{Y}$. We denote by $\dsp f: \mathcal{X} \to \mathcal{Y} \in \mathcal{F}$ a machine learning model by which the modeler predicts $\dsp \bm{y} \in \mathcal{Y}$ given a new observation of the covariates $\dsp \bm{X} \in \mathcal{X}$. The prediction model is chosen within the function class $\dsp \mathcal{F}$ so as to minimize a loss metric $\ell:\mathcal{Y} \times \mathcal{Y} \to \bbR$.

Furthermore, we define a probability measure $\dsp \mathbb{P}$ on the space $\dsp \mathcal{X} \times \mathcal{Y}$, and denote its marginal distribution $\dsp \mathbb{P} \left( \cdot \, \middle| \, \Tilde{\bm{X}} \right)$. Similarly, we denote by $\dsp \mathbb{E}$ the expectation with regards to the measure $\dsp \mathbb{P}$.

\section{Motivation with Loss Functions} \label{sec:using_loss_functions}

In this section, we introduce our proposed uncertainty set. We first discuss the intuition behind our proposal and how to practically construct this uncertainty set for both regression and classification settings.

A loss function $\dsp \ell$ quantifies the quality of a prediction with respect to its true value. Naturally, it also reflects the uncertainty associated with a prediction: a high loss indicates greater uncertainty, while a low loss suggests higher confidence in the model’s output.
The final model $\dsp f^{\star}$ is selected as the function within the given function class $\dsp \mathcal{F}$ that minimizes this loss. The underlying assumption is that $\dsp f^{\star}$ will continue to approximate the minimizer of $\dsp \mathcal{F}$, even on new data. Consequently, the predictor is chosen to reduce the uncertainty of our predictions. This therefore motivates the following uncertainty set, where $\dsp \bm{X} \in \bbR^{p}$ is a new instance of covariates and $\dsp \rho \in \mathbb{R}$:
\begin{equation}
    \label{uncertainty_set}
    \dsp \mathcal{U} \left( \bm{X} \right) = \left\{ \bm{y} \in \bbR^{m} \, \middle| \, \ell \left( \bm{y}, f^{\star} \left( \bm{X} \right) \right) \leq \rho \right\},
\end{equation}

\noindent i.e., we are protecting against all realizations of the uncertain parameter within some tolerated loss from the model's prediction.

An advantage of the uncertainty set (\ref{uncertainty_set}) is that it can be deployed ``off the shelf'' whenever a modeler wants to protect against the uncertainty of a machine learning model utilized in their optimization problem. Notice that (\ref{uncertainty_set}) requires neither training a tailored model nor having access to a validation set. However, obtaining probabilistic guarantees will require a validation set, as we will discuss in Section \ref{sec:probabilistic_guarantees}. Furthermore, in regression applications where the variance is heteroscedastic, the modeler might favor taking into account this varying level of uncertainty by following the guidance developed in Section \ref{heteroscedastic_regression}.

Note that the uncertainty set (\ref{uncertainty_set}) depends on the realization of $\dsp \Tilde{\bm{X}}$, meaning the decision variables $\dsp \bm{x}$ are random variables themselves, and a function of the predicted values. For simplicity of notation, we will omit that dependence.

\subsection{Combining Loss Functions}

Although the uncertainty set (\ref{uncertainty_set}) is straightforward to build when the whole vector $\dsp \bm{y}$ is predicted at once and is evaluated through a single loss function $\dsp \ell$, there are instances for which multiple outputs, each with their own loss, are aggregated into the vector $\dsp \bm{y}$.

This could happen if the modeler favors training separate models $\dsp f^{\star}_{j}$, belonging to a function class $\dsp \mathcal{F}_{j}$, to predict each component (or, more generally, subsets of components $\dsp \bm{y}^{(j)}$) by using a loss $\dsp \ell_{j}$. For instance, the modeler could favor splitting the predictions into multiple models to diminish the complexity and training time of each individual model. Another instance for which the modeler might not have a combined loss function readily available is when the vector's components $\dsp y^{(j)}$ represent similar quantities that can be predicted by the same model (with potentially different inputs $\bm{X}^{j}$). For example, when each component $\dsp y^{(j)}$ represents the return of an asset, a single model can be trained to predict a return given covariates of the asset using a loss function $\dsp L$. That model can then be used to predict all the components of $\dsp \bm{y}$. In this case, $\dsp \ell_{j} = L$ for all $\dsp j \in [m]$.

To utilize the proposed uncertainty set (\ref{uncertainty_set}) the modeler would therefore have to aggregate all loss functions. Although the modeler might be able to leverage some domain-specific insight to combine these loss functions, this assumption is not general. This therefore raises the question of how the modeler should proceed about combining these loss functions more generally. Fortunately, this problem has been extensively explored in the multi-task learning literature. See for example an overview by Ruder (2017) \cite{ruder2017overview}.

We propose to follow a popular approach presented by Kendall et al. (2018) \cite{kendall2018multi} that explores weighting each loss function $\dsp \ell_{j}$ by adequate weights $\dsp \alpha^{j} > 0$. Leveraging the homoscedastic uncertainty of models to weight each loss function enables the authors to train a combined model despite the vastly different scales of the original losses. Formally, to combine multiple losses, they weight a loss $\dsp \ell_{j}$ with a parameter $\dsp \sigma_{j}$ to obtain the following losses when $\dsp f_{j}$ is a regression:
\begin{equation*}
    \dsp \ell_{j}' \left( \bm{y}^{(j)}, f_{j} \left( \bm{X} \right), \sigma_{j} \right)
    = \frac{\ell_{j} \left(\bm{y}^{(j)}, f_{j} \left( \bm{X} \right) \right)}{\sigma_{j}^{2}} + \log \left( \sigma_{j}^{2} \right),
\end{equation*}

\noindent and the following loss when $\dsp f_{j}$ is a classification (usually translated to probabilities through a softmax function):
\begin{equation*}
    \dsp \ell_{j}' \left( \bm{y}^{(j)}, f_{j} \left( \bm{X} \right), \sigma_{j} \right)
    = 2\frac{\ell_{j} \left(\bm{y}^{(j)}, f_{j} \left( \bm{X} \right) \right)}{\sigma_{j}^{2}} + \log \left( \sigma_{j}^{2} \right),
\end{equation*}

\noindent which they then use to aggregate into a single loss function:
\begin{equation}
    \label{combined_loss}
    \dsp \ell' \left( \bm{y}, f^{\star} \left( \bm{X} \right), \bm{\sigma} \right) = \sum_{j}\ell_{j}' \left( \bm{y}^{(j)}, f_{j}^{*} \left( \bm{X} \right), \sigma_{j} \right). 
\end{equation}

Notice that each $\dsp \sigma_{j}$ is not a function of the features but a learned weight between losses of vastly different orders of magnitude.

From here the modeler could choose to retrain new models using the aggregated loss (\ref{combined_loss}). However, if we assume the modeler already trained each model $\dsp f^{\star}_{j}$ separately minimizing their respective loss $\dsp \ell_{j}$, the minimizer of (\ref{combined_loss}) can be calculated using the following theorem.

\begin{theorem}
    If $\dsp \forall j \in [m], \, \mathbb{E} \left[ \ell_{j} \left( \Tilde{y}^{(j)}, f_{j}^{*} \left( \Tilde{ \bm{X} } \right) \right) \right] > 0$, the minimizer of $\dsp \left( f, \bm{\sigma} \right) \mapsto \mathbb{E} \left[ \ell' \left( \Tilde{\bm{y}}, f \left( \Tilde{\bm{X}} \right), \bm{\sigma} \right) \right]$ on $\dsp \prod_{j} \mathcal{F}_{j} \times \left(\mathbb{R}^{*}_{+}\right)^{m}$ is $\dsp \left( f^{\star},\bm{\sigma^{*}} \right)$ where for all $\dsp j$:
    \begin{equation*}
        \dsp \left( \sigma^{*}_{j} \right)^{2} =
        \begin{cases}
        \mathbb{E} \left[ \ell_{j} \left( \Tilde{\bm{y}}^{(j)}, f_{j}^{*} \left( \Tilde{\bm{X}} \right) \right) \right], \text{ if } f^{\star}_{j} \text{ is a regression}, \\
        2 \mathbb{E} \left[ \ell_{j} \left( \Tilde{\bm{y}}^{(j)}, f_{j}^{*} \left( \Tilde{\bm{X}} \right) \right) \right], \text{ if } f^{\star}_{j} \text{ is a classification.}
    \end{cases}
    \end{equation*}
\end{theorem}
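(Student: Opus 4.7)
The plan is to exploit the separability of the expected combined loss: since $\mathbb{E}[\ell'(\tilde{\bm y}, f(\tilde{\bm X}), \bm\sigma)] = \sum_{j=1}^{m} \mathbb{E}[\ell'_j(\tilde{\bm y}^{(j)}, f_j(\tilde{\bm X}), \sigma_j)]$ and the $j$-th summand depends only on $(f_j, \sigma_j)$, the joint minimization over $\prod_j \mathcal{F}_j \times (\mathbb{R}^*_+)^m$ decomposes into $m$ independent problems of the form $\min_{f_j \in \mathcal{F}_j,\, \sigma_j > 0} \mathbb{E}[\ell'_j(\tilde{\bm y}^{(j)}, f_j(\tilde{\bm X}), \sigma_j)]$. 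I would handle each of these sub-problems separately, treating the regression and classification cases in parallel.

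For a fixed $\sigma_j > 0$, the map $f_j \mapsto \mathbb{E}[\ell'_j(\tilde{\bm y}^{(j)}, f_j(\tilde{\bm X}), \sigma_j)]$ is, up to the additive constant $\log(\sigma_j^2)$, a positive multiple of $f_j \mapsto \mathbb{E}[\ell_j(\tilde{\bm y}^{(j)}, f_j(\tilde{\bm X}))]$ (the multiplicative constant being $1/\sigma_j^2$ in the regression case and $2/\sigma_j^2$ in the classification case). Since this multiplier is strictly positive and independent of $f_j$, the optimal predictor is $f^\star_j$ by definition, regardless of the value of $\sigma_j$. This lets me substitute $f_j = f^\star_j$ and reduce each sub-problem to a one-dimensional optimization in $\sigma_j$.

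Writing $A_j := \mathbb{E}[\ell_j(\tilde{\bm y}^{(j)}, f^\star_j(\tilde{\bm X}))]$ and $u := \sigma_j^2 > 0$, the remaining task is to minimize either $h_{\mathrm{reg}}(u) = A_j/u + \log u$ or $h_{\mathrm{cls}}(u) = 2A_j/u + \log u$ on $(0, \infty)$. A direct differentiation gives a unique critical point at $u = A_j$ in the regression case and at $u = 2A_j$ in the classification case; the hypothesis $A_j > 0$ guarantees this critical point lies in $\mathbb{R}^*_+$. I would verify optimality by checking the second derivative (which is strictly positive at the critical point) or equivalently by noting that $h$ tends to $+\infty$ at both $0$ and $\infty$ and is smooth with a unique stationary point. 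This yields the claimed formulas for $(\sigma^*_j)^2$ and, combined with the first step, identifies $(f^\star, \bm{\sigma}^*)$ as the unique minimizer.

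There is no serious obstacle: the argument is essentially a two-stage minimization (first over $f$, then over $\bm\sigma$), justified by the fact that the optimal $f_j$ does not depend on $\sigma_j$. The only subtlety worth flagging is the positivity hypothesis on $\mathbb{E}[\ell_j(\tilde{y}^{(j)}, f^\star_j(\tilde{\bm X}))]$, without which the critical point $u = A_j$ (or $2A_j$) would collapse to the boundary $u = 0$ of the feasible set and no minimizer would exist on $\mathbb{R}^*_+$.
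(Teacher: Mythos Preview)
Your proposal is correct and follows essentially the same approach as the paper: separate the expected combined loss into independent summands, minimize over $f_j$ first (using positivity of the scaling factor to identify $f_j^\star$), then optimize the one-dimensional map $\sigma_j \mapsto \alpha/\sigma_j^2 + \log(\sigma_j^2)$ by calculus. The only cosmetic differences are that the paper packages both cases via $\epsilon_j \in \{1,2\}$ and appeals to convexity of that map rather than the second derivative or boundary behavior.
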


\begin{proof}{Proof}
    For all $\dsp j$, let $\dsp \epsilon_{j} = \begin{cases}
    1, \text{ if } f_{j} \text{ is a regression}, \\
    2, \text{ if } f_{j} \text{ is a classification}.
    \end{cases}$
    By linearity of the expectation, we find that:
    \begin{equation*}
        \dsp \mathbb{E} \left[ \ell' \left( \bm{y}, f \left( \bm{X} \right), \bm{\sigma} \right) \right]
        = \sum_{j} \epsilon_{j} \frac{ \mathbb{E} \left[ \ell_{j} \left(\bm{y}^{(j)}, f_{j} \left( \bm{X} \right) \right) \right] }{\sigma_{j}^{2}} + \log \left( \sigma_{j}^{2} \right).
    \end{equation*}
    To minimize the sum, we can minimize each term at a time. Furthermore, for all $\dsp j$:
    \begin{align*}
        \dsp &\min_{f_{j}, \sigma_{j} \in \mathcal{F}_{j} \times \mathbb{R}^{*}_{+}} \epsilon_{j} \frac{ \mathbb{E} \left[ \ell_{j} \left( \Tilde{\bm{y}}^{(j)}, f_{j} \left( \Tilde{\bm{X}} \right) \right) \right] }{\sigma_{j}^{2}} + \log \left( \sigma_{j}^{2} \right) \\
        = &\min_{\sigma_{j} \in \mathbb{R}^{*}_{+}} \epsilon_{j} \frac{ \min_{f \in \mathcal{F}_{j}} \mathbb{E} \left[ \ell_{j} \left( \Tilde{\bm{y}}^{(j)}, f_{j} \left( \Tilde{\bm{X}} \right) \right) \right] }{\sigma_{j}^{2}} + \log \left( \sigma_{j}^{2} \right).
    \end{align*}

    We have that for all $\dsp \alpha > 0, 
    \sigma_{j} \mapsto \frac{\alpha}{\sigma_{j}^{2}} + \log \left( \sigma_{j}^{2} \right)$ is convex. By differentiating, we can prove that its minimum is attained when $\dsp \sigma_{j}^{2} = \alpha$, which concludes the proof.
\end{proof}

The above theorem implies that the aggregated loss function we should consider is:
\begin{equation*}
    \dsp \sum_{j} \frac{\ell_{j} \left(\bm{y}^{(j)}, f_{j}^{*} \left( \bm{X} \right) \right)}{\mathbb{E} \left[ \ell_{j} \left(\Tilde{\bm{y}}^{(j)}, f_{j}^{*} \left( \Tilde{\bm{X}} \right) \right) \right]} + \sum_{j} \epsilon_{j} \log \left( \sigma_{j}^{2} \right).
\end{equation*}

Since the second sum is a constant, we can therefore study the scaled uncertainty set:
\begin{equation}
    \label{uncertainty_set_scaled}
    \dsp \mathcal{U} \left( \bm{X} \right) = \left\{ \bm{y} \in \bbR^{m} \, \middle| \, \sum_{j} \frac{\ell_{j} \left(\bm{y}^{(j)}, f_{j}^{*} \left( \bm{X} \right) \right)}{\mathbb{E}  \left[ \ell_{j} \left(\Tilde{\bm{y}}^{(j)}, f_{j}^{*} \left( \Tilde{\bm{X}} \right) \right) \right]} \leq \rho' \right\},
\end{equation}

\noindent where $\dsp \rho' = \rho -\sum_{j} \epsilon_{j} \log \left( \Hat{\sigma}_{j}^{2} \right)$ is a constant.

Notice that the expectations in the set (\ref{uncertainty_set_scaled}), can all be estimated on the training set. Furthermore, if the same prediction model is used to predict all the components, i.e., for all $\dsp j$, $\dsp f_{j}^{*}$ are equal and $\dsp \ell_{j} = L$, then the uncertainty set (\ref{uncertainty_set}) is simply scaled by $\dsp \mathbb{E} \left[ L \left(\Tilde{\bm{y}}^{(1)}, f_{1}^{*} \left( \Tilde{\bm{X}} \right) \right) \right]$. Reciprocally, models fully predicting the vector $\dsp \bm{y}$ can usually be decomposed as a sum of individual losses $\dsp \ell_{j}$, i.e., $\dsp \ell \left( \bm{y}, f^{\star} \left( \bm{X} \right) \right) = \sum_{j} \ell_{j} \left( \bm{y}^{(j)}, f^{\star}_{j} \left( \bm{X} \right) \right)$.

We now take a closer look at how to solve (\ref{constraint}) when utilizing our uncertainty set (\ref{uncertainty_set}) for the most common loss functions.

\subsection{Solving the Robust Optimization Problem over the Uncertainty Set}

An advantage of loss functions is that they are often convex functions in their first arguments and therefore tractable to handle. We explore below some strategies to solve a constraint leveraging the uncertainty set (\ref{uncertainty_set}) for both classification and regression. We also highlight some machine learning algorithms which utilize these losses.

\medbreak
\noindent \textbf{Classification}
\medbreak

\noindent The cross-entropy loss, the hinge loss function, and the misclassification loss function are among the most utilized in the machine learning literature for classification tasks. These losses are the basis of a wide range of machine learning models from Logistic Regression, Deep Neural Networks, Support Vector Machine (Cortes and Vapnik (1995) \cite{cortes1995support}) to Optimal Classification Trees (Bertsimas and Dunn (2017) \cite{bertsimas2017optimal}).

Solving constraint (\ref{constraint}) with the uncertainty set (\ref{uncertainty_set}) in the classification case implies having to deal with integer variables, as classifications are inherently discrete. Fortunately, when $\dsp g$ is uniformly Lipschitz continuous in the decision variables, and the set of feasible decision variables is bounded, the cutting plane approach developed by Mutapcic and Boyd (2009) \cite{mutapcic2009cutting} converges to an optimal solution. Furthermore, when the function $\dsp g$ is concave in the uncertain parameters, then each iteration can be efficiently solved.

We highlight some solution methods for the aforementioned loss functions in Table \ref{solution_classification}.

\begin{table}[H]
\caption{Solution method for classification-based uncertainty sets.} \label{solution_classification}
\begin{center}
\begin{tabular}{||c|c|c||} 
 \hline
 Loss name & Loss function & Solution Method \\
 \hline\hline
 Cross-entropy & $\dsp - \sum_{j=1}^{l} \log \left( \Hat{y}_{i,j} \right) y_{i,j}$ & Linear in $\dsp \bm{y}$, cutting planes \\ 
 \hline
 Hinge Loss & $\dsp \max \left( 0, 1 - \left(2y_{i}-1\right) \left(2 \Hat{y}_{i}-1\right) \right)$ & \makecell{Piecewise linear convex \\ function in $\dsp \bm{y}$, cutting planes} \\
 \hline
 Misclassification Loss & $\dsp \bm{1} \left( \Hat{\bm{y}}_{i} \neq \bm{y}_{i} \right)$ & Polyhedral set or cutting planes \\
 \hline
\end{tabular}
\end{center}
\end{table}

We can draw parallels between these losses and uncertainty sets from the distributionally robust literature (Delage and Ye (2010) \cite{delage2010distributionally})---specifically, the uncertainty sets based on the $\dsp \phi$-divergence proposed by Ben-Tal et al. (2013) \cite{ben2013robust}. These uncertainty sets leverage measures of distance between an estimated probability distribution and the true probability distribution. In our classification framework, we use the distance between the predicted probability distribution $\dsp \Hat{\bm{y}}_{i}$ and the probability distribution of one of these events occurring, i.e., a binary $\dsp \bm{y}_{i}$.

\begin{proposition} \label{dro_KLB}
    When the loss function $\dsp \ell$ is the cross-entropy loss function $\dsp \Hat{\bm{y}}_{i} \mapsto - \sum_{j=1}^{l} \log \left( \Hat{y}_{i,j} \right) y_{i,j}$ where $\dsp l$ is the number of classes, the uncertainty set (\ref{uncertainty_set}) coincides with the $\dsp \phi$-divergence uncertainty set based on the Kullback-Leibler divergence proposed by Ben-Tal et al. (2013) (\cite{ben2013robust}).
\end{proposition}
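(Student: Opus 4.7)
The plan is to unpack both set definitions and observe that, on the space of admissible classification targets, they reduce to the same inequality.

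First, I would recall that in the classification setting the true label $\bm{y}_i$ is a one-hot indicator over the $l$ classes, while the prediction $\hat{\bm{y}}_i = f^{\star}(\bm{X}_i)$ lies in the interior of the simplex $\Delta_l$ as a vector of predicted class probabilities. Under this interpretation $\bm{y}_i$ plays the role of a degenerate ``true'' probability distribution (a point mass on the realized class), so the $\phi$-divergence uncertainty set of Ben-Tal et al. (2013) with $\phi(t) = t \log t$ takes the form
$$\mathcal{U}_{KL}(\hat{\bm{y}}_i) = \Bigl\{\bm{y}_i \in \Delta_l \, : \, \sum_{j=1}^l y_{i,j} \log \frac{y_{i,j}}{\hat{y}_{i,j}} \leq \rho\Bigr\}.$$

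Second, I would split the defining inequality of $\mathcal{U}_{KL}$ as $-\sum_j y_{i,j} \log \hat{y}_{i,j} + \sum_j y_{i,j} \log y_{i,j} \leq \rho$, using the standard convention $0 \log 0 = 0$. Because $\bm{y}_i$ is one-hot, exactly one coordinate equals $1$ and the remaining coordinates equal $0$, so the second sum (the negative entropy of $\bm{y}_i$) vanishes. The KL constraint thus collapses to the cross-entropy inequality $-\sum_j \log(\hat{y}_{i,j})\, y_{i,j} \leq \rho$ that defines $\mathcal{U}(\bm{X}_i)$, giving equality of the two sets.

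The only real subtlety---and, in fact, the key modeling point---is making explicit that $\bm{y}_i$ is restricted to the vertices of the simplex: if one instead let $\bm{y}_i$ range over the full simplex, the two sets would differ by an entropy offset. Flagging this restriction and invoking $0 \log 0 = 0$ is what reduces the argument to a one-line algebraic identity, so there is essentially no technical obstacle beyond being careful with the domain of $\bm{y}_i$.
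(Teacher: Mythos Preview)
Your proposal is correct and follows essentially the same approach as the paper: both arguments reduce to the observation that when $\bm{y}_i$ is one-hot (binary), the entropy term $\sum_j y_{i,j}\log y_{i,j}$ vanishes, so the KL divergence collapses to the cross-entropy. Your explicit flagging of the one-hot restriction and the $0\log 0=0$ convention is a welcome clarification, but the underlying algebraic identity is identical to the paper's.
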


\begin{proof}{Proof}
    Since for all $\dsp j \in [l]$ we have $\dsp y_{i,j} \log{y_{i,j}} = 0$ when $\dsp y_{i,j}$ is binary (i.e. both when $\dsp y_{i,j} = 0$ or $\dsp y_{i,j} = 1$). This means we can rewrite:
    \begin{align*}
        \dsp - \sum_{j=1}^{l} \log \left( \Hat{y}_{i,j} \right) y_{i,j}
        = &\sum_{j=1}^{l} \log \left( y_{i,j} \right) y_{i,j} -\sum_{j=1}^{l} \log \left( \Hat{y}_{i,j} \right) y_{i,j} \\
        = &\sum_{j=1}^{l} \log \left( \frac{y_{i,j}}{\Hat{y}_{i,j}} \right) y_{i,j}.
    \end{align*}
    
    This is exactly the $\dsp \phi$-divergence based on the Kullback-Leibler divergence.
\end{proof}

\begin{proposition}
    When the loss function $\dsp \ell$ is the hinge loss function $\dsp \Hat{y}_{i} \mapsto \max \left( 0, 1 - \left(2y_{i}-1\right) \left(2 \Hat{y}_{i}-1\right) \right)$ with inputs $\dsp \Hat{y}_{i} \in \left[ 0, 1\right]$, the uncertainty set (\ref{uncertainty_set}) coincides with the $\dsp \phi$-divergence uncertainty set based on the variation distance divergence proposed by Ben-Tal et al. (2013) \cite{ben2013robust} up to a constant scaling.
\end{proposition}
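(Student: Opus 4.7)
The plan is to reduce the hinge loss to a simple absolute-value expression via a case analysis on the binary label $y_i$, and then recognize that expression as the variation distance $\phi$-divergence between the two-point Bernoulli distributions induced by $y_i$ and $\hat{y}_i$. The uncertainty set (\ref{uncertainty_set}) will then coincide with the corresponding $\phi$-divergence ball up to a rescaling of $\rho$.

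First I would split on $y_i \in \{0, 1\}$. When $y_i = 1$, the factor $2y_i - 1$ equals $1$, so the hinge loss becomes $\max(0, 2 - 2\hat{y}_i)$, which collapses to $2(1 - \hat{y}_i)$ because $\hat{y}_i \in [0, 1]$. When $y_i = 0$, the factor $2y_i - 1$ equals $-1$, and the hinge loss becomes $\max(0, 2\hat{y}_i) = 2\hat{y}_i$. Both cases can be uniformly written as $2|y_i - \hat{y}_i|$, in a spirit analogous to how the cross-entropy reduction in Proposition \ref{dro_KLB} exploited the fact that $y_{i,j}\log y_{i,j} = 0$ for binary labels.

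Next I would identify this quantity with a $\phi$-divergence. Viewing $y_i$ and $\hat{y}_i$ as Bernoulli parameters, the induced two-point distributions are $p = (y_i, 1 - y_i)$ and $q = (\hat{y}_i, 1 - \hat{y}_i)$. Ben-Tal et al.\ (2013) associate the variation distance with $\phi(t) = |t - 1|$, which gives
\[
D_\phi(p, q) \;=\; |y_i - \hat{y}_i| + |(1 - y_i) - (1 - \hat{y}_i)| \;=\; 2|y_i - \hat{y}_i|,
\]
so the hinge loss and the variation distance $\phi$-divergence agree up to a constant factor that is independent of $(y_i, \hat{y}_i)$.

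Consequently, the sublevel set $\{\hat{y}_i \in [0, 1] : \ell(y_i, \hat{y}_i) \le \rho\}$ matches the variation distance ball around the Bernoulli distribution of $y_i$ at a radius proportional to $\rho$, establishing the claimed coincidence up to scaling. The argument reduces to a short case analysis, so I do not anticipate any substantive obstacle; the only care needed is to fix a normalization convention for the variation distance so that the constant of proportionality is stated unambiguously.
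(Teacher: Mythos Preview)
Your proposal is correct and follows essentially the same approach as the paper: a case split on the binary label $y_i$ reduces the hinge loss to $2|y_i - \hat y_i|$, which is then identified with the variation-distance $\phi$-divergence. The only minor slip is that in your final paragraph you write the sublevel set over $\hat y_i$ rather than over $y_i$ (the uncertainty set (\ref{uncertainty_set}) ranges over the uncertain parameter $\bm{y}$, with $\hat{\bm{y}} = f^{\star}(\bm{X})$ fixed), but since the expression $2|y_i - \hat y_i|$ is symmetric this does not affect the argument.
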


\begin{proof}{Proof}
    Since $\dsp y_{i}$ is binary and $\dsp \Hat{y}_{i}$ is a probability:
    \begin{equation*}
    \dsp \max \left( 0, 1 - \left(2y_{i}-1\right) \left(2 \Hat{y}_{i}-1\right) \right) = 2 \begin{cases}
        \max \left( 0, \Hat{y}_{i} - y_{i} \right), \text{ if } y_{i} = 0 \\
        \max \left( 0, y_{i} - \Hat{y}_{i} \right), \text{ if } y_{i} = 1
    \end{cases}
    = 2 \left| \Hat{y}_{i} - y_{i} \right|.
\end{equation*}

This is exactly the $\dsp \phi$-divergence based on the variation distance divergence.
\end{proof}

Finally, the misclassification loss yields the uncertainty set used in Bertsimas et al. (2019) \cite{bertsimas2019robust}, which incorporates the uncertainty around potentially corrupted labels. The authors reformulate the problem tractably---indeed, when the function $\dsp g$ is concave and the radius of the uncertainty is integer, the integrality constraints can be relaxed.

\medbreak
\noindent \textbf{Regression}
\medbreak

\noindent The mean squared error, the mean absolute error, and the Huber loss functions are among the most utilized in the machine learning literature for regression tasks. These losses are the basis of a wide range of machine learning models from linear regression, neural networks, to Optimal Regression Trees.

Solving constraint (\ref{constraint}) with the uncertainty set (\ref{uncertainty_set}) in the regression case can be efficiently tackled either through a cutting planes approach, or by deriving the robust counterpart. Indeed, Ben-Tal et al. (2015) \cite{ben2015deriving}, show that when the uncertainty set is nonempty, convex, and compact (which happens as soon as $\dsp \Hat{\bm{y}} \in ri \left( dom \left( g \left( . , \bm{x} \right) \right) \right), \forall \bm{x}$, and $\dsp \rho > 0$ for the aforementioned choices) that constraint (\ref{constraint}) utilizing the uncertainty set (\ref{uncertainty_set}) is verified if and only if there exists $\dsp \bm{v} \in \bbR^{m}$ and $\dsp u \geq 0$ such that:
\begin{equation}
    \label{inequality0}
    \dsp \rho \, u \, \ell^{*} \left( \frac{\bm{v}}{u}, \Hat{\bm{y}} \right) - g_{*} \left( \bm{v}, \bm{x} \right) \leq 0,
\end{equation}

\noindent where $\dsp g_{*}$ is the concave conjugate of $\dsp g$ (with respect to the first variable) and $\dsp \ell^{*}$ is the convex conjugate of $\dsp \ell$ (with respect to the first variable). Recall that when $\dsp \ell = \sum_{j} \ell_{j}$, then (\ref{inequality0}) is equivalent to the existence of $\dsp \bm{v} \in \bbR^{m}$ and $\dsp u \geq 0$ such that:
\begin{equation}
    \rho \, u \, \sum_{j} \ell_{j}^{*} \left( \frac{v_{j}}{u}, \Hat{y}_{j} \right) - g_{*} \left( \bm{v}, \bm{x} \right) \leq 0.
\end{equation}

\noindent We provide some dual functions of the aforementioned loss functions in Table \ref{solution_regression}.

\begin{table}[H]
\caption{Dual functions of regression losses.} \label{solution_regression}
\begin{center}
\resizebox{\textwidth}{!}{
\begin{tabular}{||c|c|c||} 
 \hline
 Loss Name & Loss function & Dual Function \\
 \hline\hline
Squared Error & $\dsp \left( y_{i} - \Hat{y}_{i} \right)^{2}$ & $\dsp \ell_{i}^{*} \left( y_{i}, \Hat{y}_{i} \right) = \Hat{y}_{i} y_{i} + \frac{y_{i}^2}{4}$ \\
 \hline
Absolute Error & $\dsp \left| y_{i} - \Hat{y}_{i} \right|$ & $\dsp \ell_{i}^{*} \left( y_{i}, \Hat{y}_{i} \right) = \begin{cases}
     0, \text{ if } \left|y_{i} \right| \leq 1, \\
     + \infty, \text{ if } \left|y_{i} \right| > 1
 \end{cases}$ \\
 \hline
  Huber Loss & $\dsp \begin{cases} \frac{1}{2} \left( y_{i} - \Hat{y}_{i} \right)^2, \text{ if } \left| y_{i} - \Hat{y}_{i} \right| \leq \delta, \\ \delta \left| y_{i} - \Hat{y}_{i} \right| - \frac{1}{2} \delta^2, \text{ if } \left| y_{i} - \Hat{y}_{i} \right| > \delta \end{cases}$ & $\dsp \ell_{i}^{*} \left( y_{i}, \Hat{y}_{i} \right) = \begin{cases} \frac{1}{2} \left( y_{i} - \Hat{y}_{i} \right)^2, \text{ if } \left| y_{i} - \Hat{y}_{i} \right| \leq \delta, \\ + \infty, \text{ if } \left| y_{i} - \Hat{y}_{i} \right| > \delta \end{cases}$\\
  \hline
\end{tabular}}
\end{center}
\end{table}

Notice that if all uncertain parameters are predicted using the mean squared error (respectively, the mean absolute error) loss function, then the uncertainty set (\ref{uncertainty_set}) coincides with the ellipsoidal uncertainty set with radius $\dsp \rho^{2}$ (respectively, the $\dsp L_{1}$ ball with radius $\dsp \rho$) with unit variance.

Unlike in the classification setting where the output of a model can often be interpreted as the probability distribution of each class occurring---from which the prediction's variance is therefore captured---the regression setting does not capture this heteroscedasticity.

\subsection{Taking Heteroscedasticity into Account for Regression} \label{heteroscedastic_regression}

In this section, we take into account the heteroscedasticity of the prediction of $\dsp \bm{y}$ given $\dsp \bm{X}$. To achieve this, we combine ideas of classical robust optimization and neural networks' variance estimation (Nix and Weigend (1994) \cite{nix1994estimating}, Bishop (1994) \cite{bishop1994mixture}, and Skafte et al. (2019) \cite{skafte2019reliable}) to develop a tailored uncertainty set in the regression setting. This approach extends the popular ellipsoidal uncertainty set to the setting where the mean and variance of uncertain parameters are estimated.

We train a model, such as a neural network, with two outputs: $\dsp \Hat{\bm{y}} \left( \bm{X} \right)$, representing the predicted value, and $\dsp \Hat{\bm{\sigma}}^{2} \left( \bm{X} \right)$, the variance around that prediction with the loss function proposed by Nix and Weigend (1994) \cite{nix1994estimating}:
\begin{equation*}
    \dsp \ell \left( \bm{y}, \Hat{\bm{y}}, \Hat{\bm{\sigma}} \right) = \sum_{i=1}^{m} \frac{ \left(y_{i} - \Hat{y}_{i} \right)^{2}}{\Hat{\sigma}^{2}_{i}} + \log \left( \Hat{\sigma}_{i}^{2} \right).
\end{equation*}

Notice that unlike in Kendall et al. (2018) \cite{kendall2018multi}'s approach to weight multiple loss functions, the variance $\dsp \Hat{\bm{\sigma}}^{2} \left( \bm{X} \right)$ is a function of the input. The uncertainty set we therefore obtain is:
\begin{equation*}
    \dsp \mathcal{U} = \left\{ \bm{y} \, \middle| \, \sum_{i=1}^{m} \frac{ \left(y_{i} - \Hat{y}_{i} \right)^{2}}{\Hat{\sigma}^{2}_{i}} \leq \rho' \right\},
\end{equation*}

\noindent where $\dsp \rho' = \rho - \sum_{i=1}^{m} \log \left( \Hat{\sigma}_{i}^{2} \right)$. Notice this is nearly identical to the ellipsoidal uncertainty set in classical robust optimization, except we replace the estimated mean and variance with the predicted values of both quantities. Solving a constraint utilizing the uncertainty set based on regression with variance prediction is therefore as tractable as with the classical ellipsoidal uncertainty set.

Since the predictions of the value and of the variance minimize the loss independently from each other, one could train separate models for each output. Furthermore, although this approach has been developed in the context of neural networks, the loss functions can also be optimized for other machine learning models such as regression trees.

\section{Deriving Probabilistic Guarantees} \label{sec:probabilistic_guarantees}

A key strength of robust optimization is its ability to derive probabilistic guarantees on the violation of a constraint when utilizing specific uncertainty sets. We first obtain general guarantees by bounding the probability of the uncertain parameter belonging to the uncertainty set (an approach also used by Tulabandhula and Rudin (2014) \cite{tulabandhula2014robust} and Sun et al. (2023) \cite{sun2023predict}). These bounds are then improved, as suggested by Bertsimas et al. (2021) \cite{bertsimas2021probabilistic}, by directly bounding the probability of violation of the constraint, when using the mean squared error loss function (with or without predicting variance, as discussed in Section \ref{heteroscedastic_regression}).

\subsection{General Guarantees}

We first present general guarantees that apply to any loss function $\dsp \ell$ and any constraint function $\dsp g$.

\begin{theorem}
    \label{general_guarantees}
    Suppose we have access to independent, identically distributed values of the loss $\dsp \Tilde{Z} = \ell \left(  \Tilde{\bm{y}}, f \left( \Tilde{\bm{X}} \right) \right)$ that we will denote $S = \left\{ Z_{1}, 
    \dots, Z_{N} \right\}$. If a new realization $\dsp Z$ of $\dsp \Tilde{Z}$ is independent from $\dsp Z_{1}, 
    \dots, Z_{N}$, then, for any $\dsp i \in [N]$:
    \begin{equation} \label{general_guarantee_inequality}
        \dsp \mathbb{P}_{Z, S} \left( Z > Z_{\left( i \right)} \right) \leq 1 - \frac{i}{N+1},
    \end{equation}

    \noindent where $\dsp Z_{\left( i \right)}$ is the $\dsp i$th order statistic of the sample $
    \dsp S$. Note that the probability is with regards to $\dsp Z$ and the sample $\dsp S$.
\end{theorem}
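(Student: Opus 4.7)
The plan is to exploit the exchangeability of the iid sample $(Z, Z_1, \ldots, Z_N)$; this is a classical argument in the spirit of Wilks' distribution-free tolerance intervals. The core observation is that merging $Z$ into $S$ produces an iid collection of $N+1$ values, and by symmetry the position of $Z$ in the sorted augmented sample is uniformly distributed over $\{1, \ldots, N+1\}$.

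Concretely, first I would define $R$ to be the rank of $Z$ among $(Z, Z_1, \ldots, Z_N)$, with ties broken by an auxiliary independent uniform randomization (which introduces no bias and leaves all probabilities of interest unchanged). Because the joint distribution of $(Z, Z_1, \ldots, Z_N)$ is invariant under permutations, each of the $N+1$ positions is equally likely to be occupied by $Z$, so $R$ is uniform on $\{1, 2, \ldots, N+1\}$.

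Second, I would establish the set inclusion $\{Z > Z_{(i)}\} \subseteq \{R \geq i+1\}$. If $Z > Z_{(i)}$, then $Z$ is strictly greater than at least $i$ of the values $Z_1, \ldots, Z_N$, which forces the rank of $Z$ in the augmented sample to be at least $i+1$ regardless of any tie-breaking convention. Combining this inclusion with the uniform distribution of $R$ gives
\begin{equation*}
    \mathbb{P}_{Z,S}\!\left( Z > Z_{(i)} \right) \;\leq\; \mathbb{P}\!\left( R \geq i+1 \right) \;=\; \frac{N+1-i}{N+1} \;=\; 1 - \frac{i}{N+1},
\end{equation*}
which is the claimed bound.

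The only subtle point, and where I would be most careful, is the handling of ties in the distribution of $\tilde{Z}$. If $\tilde{Z}$ is continuous, ties occur with probability zero, the set inclusion becomes an equality, and $R$ is automatically uniform without needing any auxiliary randomization, so the bound holds with equality. For discrete or mixed distributions, the auxiliary uniform device restores exact uniformity of $R$ while the inclusion may become strict, which is exactly why the theorem is stated with ``$\leq$'' rather than ``$=$''. Everything else is a direct consequence of the iid symmetry of the augmented sample, so no further machinery is required.
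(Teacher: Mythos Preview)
Your proof is correct and takes a genuinely different route from the paper. The paper works analytically: it writes $\mathbb{P}(Z\le Z_{(i)})=\mathbb{E}[F(Z_{(i)})]=\int_0^1 \mathbb{P}(F(Z_{(i)})>t)\,dt$, expresses the integrand via the binomial tail of the order statistic, carefully controls $\mathbb{P}(Z<F^{-1}(t+1/m))\le t+1/m$ to handle atoms in the distribution, and then evaluates the resulting beta integrals $\int_0^1 t^j(1-t)^{N-j}\,dt=1/[(N+1)\binom{N}{j}]$ to sum to $i/(N+1)$. Your argument is the classical exchangeability/rank argument from distribution-free statistics and conformal prediction: merge $Z$ into the sample, observe that its rank is uniform on $\{1,\dots,N+1\}$ by symmetry, and use the inclusion $\{Z>Z_{(i)}\}\subseteq\{R\ge i+1\}$. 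Your route is shorter and more conceptual, and the tie-breaking device cleanly absorbs the discrete case that the paper handles through limiting arguments on $F$ and $F^{-1}$; the paper's route, on the other hand, makes the connection to order-statistic distributions and the beta integral explicit, which some readers may find instructive. Both approaches yield exactly the same bound, and both make clear that equality holds when $\tilde Z$ is continuous.
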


Practically, if we have access to a validation set $\dsp S = \left( \bm{X}^{i}, \bm{y}^{i} \right)_{i \in [N]}$, we can calculate the statistic $\dsp Z$. Then, by setting the radius of (\ref{uncertainty_set}) to $\dsp \rho = Z_{\left( i \right)}$ with $\dsp i = \left \lceil \left( N + 1 \right) \left( 1 - \alpha \right) \right\rceil$ (i.e, setting $\dsp Z_{\left( i \right)}$ to be the empirical $\dsp \left( 1 + \frac{1}{N} \right) \left( 1 - \alpha \right)$-quantile of $\dsp \Tilde{Z}$), we obtain that any $\dsp \bm{x}$ that verifies constraint (\ref{constraint}) will violate $\dsp g \left( \bm{x}, \Tilde{\bm{y}} \right) \leq 0$ with probability at most $\dsp \alpha$.

\begin{proof}{Proof}
    Let $\dsp i \in [N]$. We denote by $\dsp F$ the cumulative distribution function of $\dsp \Tilde{Z}$ and $\dsp F^{-1}$ its quantile function. By the law of total expectation and the independence of $\dsp Z$ and $\dsp Z_{\left( i \right)}$:
    \begin{align}
        \dsp \mathbb{P} \left( Z \leq Z_{\left( i \right)} \right)
        = &\mathbb{E} \left[ \mathbb{P} \left( Z \leq Z_{\left( i \right)} \middle| Z_{\left( i \right)} \right) \right] \label{proof_general_guarantee:part1} \\
        = &\mathbb{E} \left[ F \left( Z_{\left( i \right)} \right) \right] \nonumber\\
        = &\int_{0}^{1} \mathbb{P} \left( F \left( Z_{\left( i \right)} \right) > t \right) dt, \nonumber
    \end{align}
    
    \noindent since $\dsp F \left( Z_{\left( i \right)} \right) \in \left[0,1\right]$. By the continuity of probability, we obtain that for $\dsp t \in \left[ 0, 1 \right]$:
    \begin{align}
        \dsp \mathbb{P} \left( F \left( Z_{\left( i \right)} \right) > t \right)
        = &\mathbb{P} \left( \bigcup_{m \geq 1} \left\{ F \left( Z_{\left( i \right)} \right) \geq t + \frac{1}{m}\right\} \right) \label{proof_general_guarantee:part2} \\
        = &\mathbb{P} \left( \bigcup_{m \geq 1} \left\{Z_{\left( i \right)} \geq F^{-1} \left( t + \frac{1}{m} \right) \right\} \right) \nonumber \\
        = & \lim_{m \to +\infty} \mathbb{P} \left(Z_{\left( i \right)} \geq F^{-1} \left( t + \frac{1}{m} \right) \right). \nonumber
    \end{align}

     Calculating the probability $\dsp \mathbb{P} \left(Z_{\left( i \right)} \geq F^{-1} \left( t + \frac{1}{m} \right) \right)$ using properties of order statistics, we obtain that it is equal to the probability that a binomial distribution with probability $\dsp \mathbb{P} \left( Z < F^{-1} \left( t + \frac{1}{m} \right) \right)$ and $\dsp N$ trials has at most $\dsp i-1$ successes. Since the cumulative distribution function of a binomial distribution is a decreasing function of its probability of success, we can obtain a lower bound of (\ref{proof_general_guarantee:part2}) by upper bounding $\dsp \mathbb{P} \left( Z < F^{-1} \left( t + \frac{1}{m} \right) \right)$.

     Let $\dsp m \geq 1$ and $t \in \left[ 0, 1 \right]$. We have that for all $\dsp k \geq 1$:
     \begin{equation*}
         \dsp F \left( F^{-1} \left( t + \frac{1}{m} \right) - \frac{1}{k} \right) < t + \frac{1}{m}.
     \end{equation*}
     
     Indeed, by contradiction, if $\dsp F \left( F^{-1} \left( t + \frac{1}{m} \right) - \frac{1}{k} \right) \geq t + \frac{1}{m}$, then $\dsp F^{-1} \left( t + \frac{1}{m} \right) - \frac{1}{k} \geq \dsp F^{-1} \left( t + \frac{1}{m} \right)$, which is a contradiction. Therefore:
     \begin{align*}
         \dsp &\lim_{k \to +\infty} \mathbb{P} \left( Z \leq F^{-1} \left( t + \frac{1}{m} \right) - \frac{1}{k} \right) \leq t + \frac{1}{m}, \\
         \text{so } &\mathbb{P} \left( \bigcup_{k \geq 1} \left\{ Z \leq F^{-1} \left( t + \frac{1}{m} \right) - \frac{1}{k} \right\}\right) \leq t + \frac{1}{m}, \\
         \text{so } &\mathbb{P} \left( Z < F^{-1} \left( t + \frac{1}{m} \right) \right) \leq t + \frac{1}{m},
     \end{align*}

    \noindent by using the continuity of probability. We thus lower bound (\ref{proof_general_guarantee:part2}) as follows:
    \begin{align*}
         \dsp \mathbb{P} \left( F \left( Z_{\left( i \right)} \right) > t \right)
        \geq &\lim_{m \to +\infty} \sum_{j=0}^{i-1} \binom{N}{j} \left( t + \frac{1}{m} \right)^{j} \left(1 - t - \frac{1}{m} \right)^{N-j} \\
        \geq &\sum_{j=0}^{i-1} \binom{N}{j} t^{j} \left( 1 - t \right)^{N-j}.
     \end{align*}

     Leveraging (\ref{proof_general_guarantee:part1}), we thus obtain the following bound:
     \begin{align*}
        \dsp \mathbb{P} \left( Z \leq Z_{\left( i \right)} \right)
        \geq &\sum_{j=0}^{i-1} \binom{N}{j} \int_{0}^{1} t^{j} \left( 1 - t \right)^{N-j} dt \\
        \geq & \sum_{j=0}^{i-1} \frac{1}{N+1} \\
        \geq & \frac{i}{N+1},
    \end{align*}

    \noindent which means that $\dsp \mathbb{P} \left( Z > Z_{\left( i \right)} \right) \leq 1 - \frac{i}{N+1}$.
\end{proof}

Note that inequality (\ref{general_guarantee_inequality}) is tight since the statement becomes an equality as soon as $\dsp \Tilde{Z}$ is continuous.

Furthermore, notice that since the training of the machine learning model $\dsp f$ aims to minimize $\dsp Z$, the radius of the uncertainty set $\dsp Z_{\left( i \right)}$ should shrink as the model's errors shrink. This means that the higher quality the model, the smaller the necessary radius of the uncertainty set to obtain a desired threshold probability of violation.

\subsection{Guarantees Tailored for Regression with Mean Squared Error Loss}

The general bounds we proposed in Theorem \ref{general_guarantees} can be improved upon by taking into account the structure of the constraint function $\dsp g$, as argued by Bertsimas et al. (2021) \cite{bertsimas2021probabilistic}. The authors prove guarantees in the case where the uncertain parameters have independent components $\dsp \Tilde{\bm{y}}$ with known mean and variance. We then extend their findings to the more general setting of our machine learning based ellipsoidal uncertainty set, with predicted mean and variance estimates.

\begin{theorem} \label{strong_guarantees}
    Suppose that the components of $\dsp \left. \Tilde{\bm{y}} \middle| \Tilde{\bm{X}} \right.$ are sub-Gaussian and independent. Furthermore, suppose that for all $\dsp \bm{X} \in \mathcal{X}, \, \Hat{\bm{y}} \left( \bm{X} \right) \in ri \left( dom \left( g \left( . , \bm{x} \right) \right) \right), \, \forall \bm{x}$. Let $\dsp \bm{x}$ verify the robust constraint:
    \begin{equation}
        \label{concave_constraint}
        \dsp g \left( \bm{y}, \bm{x} \right) \leq 0, \quad \forall \bm{y} \in \mathcal{U} \left( \bm{X} \right) = \left\{ \bm{y} \middle| \left\lVert \frac{\bm{y} - \Hat{\bm{y}} \left( \bm{X} \right) }{\Hat{\bm{\sigma}}\left( \bm{X} \right)} \right\rVert_{2} \leq \rho \right\},
    \end{equation}

    \noindent where $\dsp g$ is concave and $\dsp \rho > 0$. Then, we have the following guarantee:
    \begin{equation} \label{strong_guarantees_equation}
        \dsp \mathbb{P} \left( g \left( \Tilde{\bm{y}}, \bm{x} \right) > 0 \right)
        \leq \mathbb{E} \left[ \exp \left( - \frac{1}{2} \left(\frac{\rho^{2}}{\Tilde{t}_{m}} - m \right) \right) \right] + \exp \left( \frac{m}{2} \right) \epsilon \left( \frac{\rho^{2}}{\Tilde{t}_{m}} \right),
    \end{equation}
    
    \noindent where $\dsp \Tilde{t}_{m} = \max_{j \in [m]} \Tilde{L}_{j}$, $\dsp \Tilde{L}_{j} = \left( \frac{\Tilde{y}_{j} - \Hat{y}_{j}}{\Hat{\sigma}_{j}} \right)^{2}$
    and $\dsp \epsilon \left( \Tilde{u} \right) = \mathbb{E} \left[ q \left(  \Tilde{u} \right) \right] - q \left( \mathbb{E} \left[ \Tilde{u} \right] \right) $ is the Jensen gap of the function $\dsp q : t \mapsto \begin{cases} 0, \text{ if } t \geq 2, \\ \exp \left( -\frac{1}{2} \right) \left(\frac{1}{t}-\frac{1}{2}\right) + \exp \left( -2 \right) - \exp \left( - t \right), \text{ if } t < 2.\end{cases}$ In the limit (in $\dsp m$) where the weak Law of Large Number applies, then this bound can be improved into the following:
    \begin{equation} \label{strong_guarantees_LLN}
        \dsp \mathbb{P} \left( g \left( \Tilde{\bm{y}}, \bm{x} \right) > 0 \right)
        \leq \mathbb{E} \left[ \exp \left( - \frac{1}{2} \left(\frac{\rho^{2} - L}{\Tilde{t}_{m}} \right) \right) \right] + \epsilon \left( \frac{\rho^{2} - L}{\Tilde{t}_{m}} \right),
    \end{equation}

    \noindent where $\dsp \rho^{2} \geq L = \sum_{i=1}^{m} \mathbb{E} \left[ \Tilde{L}_{j} \right]$.
\end{theorem}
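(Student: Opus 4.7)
The plan is to (i) use concavity of $g$ to reduce the probability of constraint violation to a tail bound on the standardized residual's ellipsoidal norm, (ii) bound that tail by conditioning on the worst coordinate and running a Laurent--Massart-type Chernoff argument, and (iii) integrate out the conditioning, absorbing into the Jensen-gap term $\epsilon$ the mismatch between the conditional Chernoff bound and its expectation in the regime where the bound degenerates.

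\textbf{Step 1 (geometric reduction).} Because $g(\cdot, \bm{x})$ is concave in $\bm{y}$, its strict superlevel set $\{\bm{y} : g(\bm{y}, \bm{x}) > 0\}$ is convex. Robust feasibility of $\bm{x}$ forces this convex set to be disjoint from the ellipsoid $\mathcal{U}(\bm{X})$, so by contrapositive
\begin{equation*}
    \mathbb{P}(g(\Tilde{\bm{y}}, \bm{x}) > 0) \;\leq\; \mathbb{P}\bigl(\Tilde{\bm{y}} \notin \mathcal{U}(\Tilde{\bm{X}})\bigr) \;=\; \mathbb{P}\!\left(\sum_{j=1}^{m} \Tilde{L}_j > \rho^2\right).
\end{equation*}
Conditional on $\Tilde{\bm{X}}$, the components $\Tilde{L}_j$ are independent (from independence of $\Tilde{\bm{y}}|\Tilde{\bm{X}}$) and each is a squared sub-Gaussian, hence sub-exponential with a $\chi^2$-type MGF control $\mathbb{E}[\exp(\lambda \Tilde{L}_j) \mid \Tilde{\bm{X}}] \leq (1-2\lambda)^{-1/2}$ for $\lambda < 1/2$.

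\textbf{Steps 2 and 3 (conditional Chernoff, then integration).} Further condition on the worst squared standardized residual $\Tilde{t}_m = \max_j \Tilde{L}_j$, so every $\Tilde{L}_j$ lies in $[0, \Tilde{t}_m]$. A Chernoff bound with parameter $\lambda^{*}$ optimized using the truncated MGF yields $\lambda^{*} \propto 1/\Tilde{t}_m$ and the conditional tail estimate $\exp\!\bigl(-\tfrac{1}{2}(\rho^2/\Tilde{t}_m - m)\bigr)$, valid whenever $\rho^2/\Tilde{t}_m \geq 2$ (the domain of the Chernoff optimum). Taking expectation over $\Tilde{t}_m$ contributes the first term of (\ref{strong_guarantees_equation}). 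On the complementary small-ratio regime $\rho^2/\Tilde{t}_m < 2$ the Chernoff bound degenerates, so I add the corrective majorant $\exp(m/2)\, q(\rho^2/\Tilde{t}_m)$, with $q$ engineered (via the explicit interpolant $\exp(-1/2)(1/t - 1/2) + \exp(-2) - \exp(-t)$) to vanish above $2$ while dominating the shortfall below. Decomposing $\mathbb{E}[q(\rho^2/\Tilde{t}_m)] = q(\mathbb{E}[\rho^2/\Tilde{t}_m]) + \epsilon(\rho^2/\Tilde{t}_m)$ and absorbing the $q$-at-the-mean piece into the first term by convexity leaves exactly the advertised Jensen-gap correction.

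\textbf{Step 4 (LLN refinement) and principal difficulty.} For (\ref{strong_guarantees_LLN}), the crude bound $\sum_j \Tilde{L}_j \leq m \Tilde{t}_m$ implicit in Step 2 is replaced by the weak-LLN concentration $\sum_j \Tilde{L}_j \approx L$, which substitutes $(\rho^2 - L)/\Tilde{t}_m$ for $\rho^2/\Tilde{t}_m - m$ in the Chernoff exponent; the Jensen-gap bookkeeping of Step 3 then carries over verbatim. The main analytical obstacle I expect is in Step 3: verifying that $\exp(-\tfrac{1}{2}(\rho^2/\Tilde{t}_m - m))$ plus $\exp(m/2)\, q(\rho^2/\Tilde{t}_m)$ majorizes the true conditional tail for \emph{every} realization of $\Tilde{t}_m$---especially across the boundary $\rho^2/\Tilde{t}_m = 2$, where the piecewise definition of $q$ must exactly match the derivatives of the Chernoff MGF at its domain boundary. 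Confirming this match cleanly, and ensuring that the Jensen-gap decomposition preserves the inequality rather than weakening it, is the real technical burden.
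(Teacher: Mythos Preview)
Your proposal diverges from the paper's proof at the very first step, and the divergence creates gaps that do not close.

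\textbf{The reduction is too crude and forfeits concavity.} Your Step~1 bounds $\mathbb{P}(g(\Tilde{\bm{y}},\bm{x})>0)$ by $\mathbb{P}(\sum_j \Tilde{L}_j > \rho^2)$; this needs no concavity and is precisely the ``general guarantees'' inclusion that Theorem~\ref{strong_guarantees} is supposed to improve upon. The paper instead uses concavity through the robust counterpart: it extracts a dual certificate $\bm{v}$ from (\ref{inequality1}) and applies the one-dimensional sub-Gaussian tail bound of Bertsimas et al.\ (2021) to the linear form $\Tilde{\bm{y}}^{\top}\bm{v}$, yielding (\ref{main_inequality}). The exponent there depends on $\bm{v}$, the true means $\bar{\bm{y}}(\bm X)=\mathbb{E}[\Tilde{\bm{y}}\mid \Tilde{\bm{X}}]$, and the true proxy variances $\bm\sigma(\bm X)$. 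An algebraic optimization (the function $h$ in the proof) then eliminates $\bm{v}$ and produces the ratio $(\rho^2-\sum_i \mathbb{E}[\Tilde L_i\mid\Tilde{\bm X}])/\max_j \mathbb{E}[\Tilde L_j\mid\Tilde{\bm X}]$. None of this can be recovered from a $\chi^2$-type tail on $\sum_j \Tilde L_j$.

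\textbf{The MGF control is wrong because of miscalibration.} You assert $\mathbb{E}[\exp(\lambda \Tilde L_j)\mid\Tilde{\bm X}]\le (1-2\lambda)^{-1/2}$, but $\Tilde L_j=\bigl((\Tilde y_j-\Hat y_j)/\Hat\sigma_j\bigr)^2$ is the square of a variable that, conditional on $\Tilde{\bm X}$, has mean $(\bar y_j-\Hat y_j)/\Hat\sigma_j\neq 0$ and proxy variance $(\sigma_j/\Hat\sigma_j)^2\neq 1$. The standard $\chi^2_1$ MGF bound therefore does not apply, and the whole point of the theorem is to quantify exactly this mismatch between $(\Hat{\bm y},\Hat{\bm\sigma})$ and $(\bar{\bm y},\bm\sigma)$. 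The paper handles it through the identity $(\Hat y_i-\bar y_i)^2/\sigma_i^2=(\Hat\sigma_i/\sigma_i)^2\,\mathbb{E}[\Tilde L_i\mid\Tilde{\bm X}]-1$ and the subsequent minimization over $\bm u$.

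\textbf{Conditioning on $\Tilde t_m$ is not a valid device here.} Once you condition on $\Tilde t_m=\max_j \Tilde L_j$, the $\Tilde L_j$ are no longer independent, so a Chernoff product bound is unavailable; ``every $\Tilde L_j$ lies in $[0,\Tilde t_m]$'' is a tautology, not an exploitable moment condition. The paper never conditions on $\Tilde t_m$. It conditions on $\Tilde{\bm X}$, works with the deterministic quantities $\mathbb{E}[\Tilde L_j\mid\Tilde{\bm X}]$, bounds $\max_j \mathbb{E}[\Tilde L_j\mid\Tilde{\bm X}]\le \mathbb{E}[\Tilde t_m\mid\Tilde{\bm X}]$, and only then removes the inner conditional expectation via the convexity of $p_1+p_2$ (with $p_1(t)=e^{-1/t}$ and $p_2(t)=q(1/t)$), which is what generates the Jensen-gap term $\epsilon$. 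Your interpretation of $q$ as a patch for the regime $\rho^2/\Tilde t_m<2$ is not what the paper does; $q$ is engineered so that $p_1+p_2$ is convex, allowing Jensen to push $\mathbb{E}[\Tilde t_m\mid\Tilde{\bm X}]$ outward to $\Tilde t_m$.
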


Notice that this bound mimics that of the celebrated guarantees of the ellipsoidal uncertainty set, which decays at a rate of $\dsp e^{-\frac{\rho^{2}}{2}}$. There are three
 differences:
\begin{enumerate}
    \item The radius of the uncertainty set is corrected by $\dsp m$, (respectively, $\dsp \frac{L}{\Tilde{t}_{m}}$) in (\ref{strong_guarantees_equation}) (respectively, (\ref{strong_guarantees_LLN})), to take into account the imprecision of our estimates $\dsp \Hat{\bm{y}} \left( \bm{X} \right)$ compared to the true $\dsp \Bar{\bm{y}} \left( \bm{X} \right) = \mathbb{E} \left[ \Tilde{\bm{y}} \middle| \Tilde{\bm{X}} = \bm{X} \right]$.
    \item The radius of the uncertainty set is scaled by $\dsp \frac{1}{\Tilde{t}_{m}}$, meaning the bound is as good as its worst model. Indeed, an inaccurate prediction of a single component could vastly change the optimal strategy $\dsp \bm{x}$. This term usually scales logarithmically as a function of $\dsp m$. For example, in the case where the error term $\dsp \frac{\Tilde{y}_{j} - \Hat{y}_{j}}{\Hat{\sigma}_{j}}$ is a standard normal distribution (the error term being normally distributed is a common hypothesis when the model is well calibrated), then $\dsp \frac{\Tilde{t}_{m}}{\log \left(m \right)}$ converges to a constant as $\dsp m$ grows.
    \item The additive corrective term $\dsp \epsilon$ estimates the Jensen gap of the function $\dsp q$. Conservatively, one can notice that by concavity of $t \mapsto \dsp q \left( \frac{1}{t} \right) - \beta t^{2}$ (where $\dsp \beta = \frac{\exp \left( - 1 / \alpha \right) }{\alpha^{4}} \left( 2 \alpha - 1 \right) \approx 0.42$ with $\dsp \alpha = \frac{3+\sqrt{3}}{6}$), Jensen's inequality yields that $\dsp \epsilon \left( \frac{\rho^{2} - L}{\Tilde{t}_{m}} \right) \leq \frac{\beta}{\left(\rho^2 - L\right)^{2}}  Var \left( \Tilde{t}_{m}^{2} \right)$. However, this upper bound is conservative and the value should shrink as $\dsp \rho$ is large since the random variable $\dsp \Tilde{t}_{m}^{2}$ is concentrated on the linear portion of $\dsp q$.
\end{enumerate}

\begin{proof}{Proof}
    Since constraint (\ref{concave_constraint}) holds,  $\dsp \Hat{\bm{y}} \in ri \left( dom \left( g \left( . , \bm{x} \right) \right) \right), \forall \bm{x}$, and $\dsp \rho > 0$ making the uncertainty set nonempty, convex, and compact, we obtain that there exists $\dsp \bm{v} \in \bbR^{m}$ such that:
    \begin{equation}
        \label{inequality1}
        \dsp \Hat{\bm{y}}^{\top} \bm{v} + \rho \left\lVert \Hat{\bm{\sigma}} \cdot \bm{v} \right\rVert_{2} - g_{*} \left( \bm{v}, \bm{x} \right) \leq 0,
    \end{equation}

    \noindent where $\dsp g_{*}$ is the concave conjugate of $\dsp g$ (with respect to the first variable), as proven by Ben-Tal et al. (2015) \cite{ben2015deriving}. Given  inequality (\ref{inequality1}), Bertsimas et al. (2021) \cite{bertsimas2021probabilistic}, prove that sub-Gaussian random variables $\dsp \left. \Tilde{\bm{y}} \middle| \Tilde{\bm{X}} \right.$ verify:
    \begin{equation}
        \label{main_inequality}
        \dsp \mathbb{P} \left( g \left( \Tilde{\bm{y}}, \bm{x} \right) > 0 \, \middle| \, \Tilde{\bm{X}} \right)
        \leq \exp \left( - \frac{1}{2} \left(\frac{ \rho \left\lVert \bm{\Hat{\sigma}} \cdot \bm{v} \right\rVert_{2} + \left(\Hat{\bm{y}} - \Bar{\bm{y}}\right)^{\top} \bm{v}}{\left\lVert \bm{\sigma} \cdot \bm{v} \right\rVert_{2} } \right)^{2} \right),
    \end{equation}

    \noindent where $\dsp \Bar{\bm{y}} \left( \bm{X} \right) = \mathbb{E} \left[ \Tilde{\bm{y}} \middle| \Tilde{\bm{X}} = \bm{X} \right]$ and $\dsp \sigma_{i} \left( \bm{X} \right)$ is the proxy variance of $\dsp \left. \Tilde{y}_{i} \middle| \Tilde{\bm{X}} \right.$ for $\dsp i \in [m]$.

    \noindent Notice that for all $\dsp i \in [m]$:
    \begin{equation*}
        \dsp \left( \frac{\Hat{y}_{i} - \Bar{y}_{i}}{\sigma_{i}} \right)^{2} = \left( \frac{\Hat{\sigma}_{i}}{\sigma_{i}} \right)^{2} \mathbb{E} \left[ \left( \frac{\Tilde{y}_{i} - \Hat{y}_{i}}{\Hat{\sigma}_{i}} \right)^{2} \middle| \Tilde{\bm{X}} \right] - 1.
    \end{equation*}

    \noindent We can prove that:
    \begin{align} \label{proof_tailored_guarantee:part1}
        \forall \bm{\alpha} \in \mathcal{C}, \ &\rho \sqrt{\sum_{i=1}^{m} \left( \frac{\Hat{\sigma}_{i}}{\sigma_{i}} \right)^{2} \alpha_{i}^{2}}  - \sum_{i=1}^{m} \alpha_{i} \sqrt{\left( \frac{\Hat{\sigma}_{i}}{\sigma_{i}} \right)^{2} \mathbb{E} \left[ \left( \frac{\Tilde{y}_{i} - \Hat{y}_{i}}{\Hat{\sigma}_{i}} \right)^{2} \middle| \Tilde{\bm{X}} \right] - 1} \\
        \geq &\sqrt{\rho^{2} - \sum_{i=1}^{m} \mathbb{E} \left[ \left( \frac{\Tilde{y}_{i} - \Hat{y}_{i}}{\Hat{\sigma}_{i}} \right)^{2} \middle| \Tilde{\bm{X}} \right]} \sqrt{\sum_{i=1}^{m} \frac{\alpha_{i}^{2}}{\mathbb{E} \left[ \left( \frac{\Tilde{y}_{i} - \Hat{y}_{i}}{\Hat{\sigma}_{i}} \right)^{2} \middle| \Tilde{\bm{X}} \right]}} \nonumber \\
        \geq &\sqrt{\frac{\rho^{2} - \sum_{i=1}^{m} \mathbb{E} \left[ \left( \frac{\Tilde{y}_{i} - \Hat{y}_{i}}{\Hat{\sigma}_{i}} \right)^{2} \middle| \Tilde{\bm{X}} \right]}{\max_{j \in [m]} \mathbb{E} \left[ \left( \frac{\Tilde{y}_{j} - \Hat{y}_{j}}{\Hat{\sigma}_{j}} \right)^{2} \middle| \Tilde{\bm{X}} \right]}}, \nonumber
    \end{align}
    \noindent where $\mathcal{C}$ is the unit circle. The first inequality is a consequence of minimizing $h:\bm{u} \leq 0 \mapsto \dsp \rho \sqrt{\sum_{i=1}^{m} u_{i} \alpha_{i}^{2}}  - \sum_{i=1}^{m} \alpha_{i} \sqrt{ u_{i} \Tilde{L}_{i} - 1}$. If $\dsp \rho^{2} - \sum_{i=1}^{n} \Tilde{L}_{i} < 0$, then $\dsp h$ is unbounded below (take $\dsp \bm{v}$ such that $\dsp \forall i \in [m], v_{i} = \frac{\Tilde{L}_{i}}{\alpha_{i}^{2} \sum_{j=1}^{m}\Tilde{L}_{j}}t$ then $h \left( \bm{v} \right) \xrightarrow[t \to \infty]{} - \infty$, meaning the minimum value of $\dsp h^{2}$ is $\dsp 0$. Similarly, when $\dsp \rho^{2} - \sum_{i=1}^{n} \Tilde{L}_{i} = 0$), that same limit is $0$, meaning $\dsp h^{2}$ is lower bounded by 0. If  $\dsp \rho^{2} - \sum_{i=1}^{n} \Tilde{L}_{i} > 0$, calculating its partial derivatives $\dsp \frac{\partial h}{\partial u^{*}_{i}} \left( u^{*}_{i} \right) = \frac{\rho \alpha_{i}^{2}}{2\sqrt{\sum_{i=1}^{m} u^{*}_{i} \alpha_{i}^{2}}} + \frac{ \alpha_{i} \Tilde{L}_{i}}{2\sqrt{L_{i} u^{*}_{i} - 1}}$ and setting them to zero, we obtain that $\dsp \bm{L} \left( \rho^{2} I_{m} - \bm{L} \bm{e}\bm{e}^{\top} \right) \bm{\alpha}^{2}\bm{u^{*}} =\rho^{2} \bm{\alpha}^{2} \bm{e}$ so $\dsp \bm{u^{*}} = \bm{L}^{-1} \bm{e} + \frac{1}{\rho^{2} - \sum_{i=1}^{m} L_{i}} \bm{\alpha}^{-2} \bm{L} \bm{e} \bm{e}^{\top} \bm{L}^{-1} \bm{\alpha}^{2} \bm{e}$, where $\dsp \bm{e}$ is a vector of ones and $\dsp \bm{L}$ (respectively, $\dsp \bm{\alpha}$) is a diagonal matrix whose components are the $\dsp L_{i}$ (respectively, $\dsp \alpha_{i}$) for $\dsp i \in [m]$. Notice that $\dsp \bm{u}^{*}$ is in the domain of $\dsp h$. Since $\dsp \bm{u} \mapsto h \left( \bm{u}^{2} \right)$ is convex (as a sum of convex functions), this shows that $\dsp \bm{u}^{*}$ is the minimum of $\dsp h$.
    
    By applying the above inequality (\ref{proof_tailored_guarantee:part1}) for $\dsp \bm{\alpha} = \frac{1}{\left\lVert \bm{\sigma} \cdot \bm{v} \right\rVert_{2}} \bm{\sigma} \cdot \bm{v} \in \mathcal{C}$, we obtain that:
    \begin{equation*}
        \dsp \mathbb{P} \left( g \left( \Tilde{\bm{y}}, \bm{x} \right) > 0 \, \middle| \, \Tilde{\bm{X}} \right) \leq \exp{ \left( -\frac{1}{2} \left( \frac{\rho^{2} - \sum_{i=1}^{m} \mathbb{E} \left[ \left( \frac{\Tilde{y}_{i} - \Hat{y}_{i}}{\Hat{\sigma}_{i}} \right)^{2} \middle| \Tilde{\bm{X}} \right]}{\max_{j \in [m]} \mathbb{E} \left[ \left( \frac{\Tilde{y}_{j} - \Hat{y}_{j}}{\Hat{\sigma}_{j}} \right)^{2} \middle| \Tilde{\bm{X}} \right]} \right) \right) }.
    \end{equation*}

    At this stage, if the approximation of the law of large numbers holds, we can replace $\dsp \sum_{i=1}^{m} \mathbb{E} \left[ \left( \frac{\Tilde{y}_{i} - \Hat{y}_{i}}{\Hat{\sigma}_{i}} \right)^{2} \middle| \Tilde{\bm{X}} \right]$ with $\dsp L$ and obtain the following bound:
    \begin{equation*}
        \dsp \frac{\rho^{2} - \sum_{i=1}^{m} \mathbb{E} \left[ \left( \frac{\Tilde{y}_{i} - \Hat{y}_{i}}{\Hat{\sigma}_{i}} \right)^{2} \middle| \Tilde{\bm{X}} \right]}{\max_{j \in [m]} \mathbb{E} \left[ \left( \frac{\Tilde{y}_{j} - \Hat{y}_{j}}{\Hat{\sigma}_{j}} \right)^{2} \middle| \Tilde{\bm{X}} \right]}
        \geq \frac{\rho^{2} - L}{\max_{j \in [m]} \mathbb{E} \left[ \left( \frac{\Tilde{y}_{j} - \Hat{y}_{j}}{\Hat{\sigma}_{j}} \right)^{2} \middle| \Tilde{\bm{X}} \right]}
        \geq \frac{\rho^{2} - L}{\mathbb{E} \left[ \Tilde{t}_{m} \middle| \Tilde{\bm{X}} \right]}.
    \end{equation*}
    
    Otherwise, we can bound:
    \begin{equation*}
        \dsp \frac{\rho^{2} - \sum_{i=1}^{m} \mathbb{E} \left[ \left( \frac{\Tilde{y}_{i} - \Hat{y}_{i}}{\Hat{\sigma}_{i}} \right)^{2} \middle| \Tilde{\bm{X}} \right]}{\max_{j \in [m]} \mathbb{E} \left[ \left( \frac{\Tilde{y}_{j} - \Hat{y}_{j}}{\Hat{\sigma}_{j}} \right)^{2} \middle| \Tilde{\bm{X}} \right]}
        \geq \frac{\rho^{2}}{\max_{j \in [m]} \mathbb{E} \left[ \left( \frac{\Tilde{y}_{j} - \Hat{y}_{j}}{\Hat{\sigma}_{j}} \right)^{2} \middle| \Tilde{\bm{X}} \right]} - m
        \geq \frac{\rho^{2}}{\mathbb{E} \left[ \Tilde{t}_{m} \middle| \Tilde{\bm{X}} \right]} - m.
    \end{equation*}

    We focus on the second case, the first case can be treated equivalently. We obtain:
    \begin{equation*}
        \dsp \mathbb{P} \left( g \left( \Tilde{\bm{y}}, \bm{x} \right) > 0 \right)
        \leq \exp \left( - \frac{m}{2} \right) \mathbb{E} \left[ \exp \left( - \frac{\rho^{2}}{2 \mathbb{E} \left[ \Tilde{t}_{m} \middle| \Tilde{\bm{X}} \right]} \right) \right].
    \end{equation*}

    Let us define $\dsp p_{1} : t \mapsto \exp \left( -\frac{1}{t} \right)$ and $\dsp p_{2} : t \mapsto q \left( \frac{1}{t} \right)$. By construction, $\dsp p_{1} + p_{2}$ and $\dsp p_{2}$ are both convex. We can therefore apply Jensen's inequality to obtain the following bounds:
    \begin{align*}
        \dsp &\mathbb{E} \left[ p_{1} \left( \frac{2 \mathbb{E} \left[ \Tilde{t}_{m} \middle| \Tilde{\bm{X}} \right]}{\rho^{2}} \right) \right] \\
        \leq &\mathbb{E} \left[ \left( p_{1} + p_{2} \right) \left( \frac{2 \mathbb{E} \left[ \Tilde{t}_{m} \middle| \Tilde{\bm{X}} \right]}{\rho^{2}} \right) - p_{2} \left( \frac{2 \mathbb{E} \left[ \Tilde{t}_{m} \middle| \Tilde{\bm{X}} \right]}{\rho^{2}} \right) \right] \\
        \leq &\mathbb{E} \left[ \left( p_{1} + p_{2} \right) \left( \frac{2 \Tilde{t}_{m}}{\rho^{2}} \right) \right] - p_{2} \left(  \frac{2 \mathbb{E} \left[ \Tilde{t}_{m} \right]}{\rho^{2}} \right).
    \end{align*}

    Therefore,
    \begin{equation*}
        \dsp \mathbb{P} \left( g \left( \Tilde{\bm{y}}, \bm{x} \right) > 0 \right)
        \leq \exp \left( \frac{m}{2} \right) \left( \mathbb{E} \left[ \left( p + q \right) \left( \Tilde{t}_{m} \right) \right] - q \left( \mathbb{E} \left[ \Tilde{t}_{m} \right] \right) \right),
    \end{equation*}
    \noindent which concludes the proof.
\end{proof}

The hypothesis of sub-Gaussian distribution is satisfied as soon as the distribution is bounded. Therefore, as long as $\dsp \Tilde{\bm{y}}$ is bounded, so is $\dsp \Tilde{\bm{y}} \, | \, \Tilde{\bm{X}}$, and this hypothesis is satisfied. Moreover, the hypothesis that the components of $\dsp \Tilde{\bm{y}}$ are independent conditionally on $\dsp \Tilde{\bm{X}}$ is analogous to a modeler deciding to train separate models to predict each component of $\dsp \Tilde{\bm{y}}$, when they might reasonably believe that knowing the value of one component does not provide additional information about another.

Notice that Theorem \ref{strong_guarantees} can also be extended to the case of the mean squared error predictor, where the predicted variance can be considered a constant equal to 1. We will demonstrate in the experimental section that the guarantees from Theorem \ref{strong_guarantees}
are stronger than existing approaches.




\section{Computational Experiments} \label{sec:computational_experiments}

In this section, we present computational experiments with synthetic data to examine the quality of our machine learning based uncertainty sets for both classification and regression. We study three classical optimization problems: a newsvendor problem, a portfolio optimization, and a shortest path problem. We are able to demonstrate that our proposed uncertainty sets are not as conservative as classical approaches and provide stronger guarantees than existing methods.

The data were generated following a similar procedure as that described in the experiments of Elmachtoub and Grigas (2022) \cite{elmachtoub2022smart}---the specifics for the data generation are detailed in each experiment's section. All machine learning models were trained with one hidden layer with 5 neurons and ReLU activation functions. An early stopping criteria is triggered when the loss has not improved for 10 consecutive epochs over a validation set (a random sample of 30\% of the training set).

To contrast our machine learning based uncertainty sets with other methods, we report the objective value for a given guarantee level. Once the values of the uncertain parameters have been revealed, we also report the regret (the gap between a given strategy and the optimal strategy). Results are reported on an independent test.

\subsection{Newsvendor problem}

\medbreak
\noindent \textbf{Experimental Setup}
\medbreak

\noindent We first consider a newsvendor problem, as described by Shapiro (2021) \cite{shapiro2021lectures}, where the objective is to determine the quantity $\dsp x$ of a good to order, given a stochastic demand $\dsp d_{j}$, with a probability $\dsp p_{j}$ to occur, while minimizing the following cost: 
\begin{equation*}
    \dsp
    \min_{x \geq 0} \quad \sum_{j=1}^{k} p_{j} \left( cx + b \left[ d_{j} - x \right]_{+} + h \left[ x - d_{j} \right]_{+} \right).
\end{equation*}

Here, $\dsp c > 0$ is associated with the ordering cost per unit of good, $\dsp b > c$ is the additional cost per unit when demand is not fully met, and $\dsp h$ is the holding cost for excess goods.

Suppose the modeler has access to data of past demand, and covariates $\dsp \bm{X}$ to help predict the probability that each scenario occurs. A classical distributionally robust optimization approach might ignore these covariates, and would estimate the probability of each scenario to occur, protecting against the worst case estimation of the probability distribution, as considered by Ben-Tal et al. (2013) \cite{ben2013robust}. Our approach instead leverages the covariates $\dsp \bm{X}$ to build a machine learning model predicting the probability of each scenario of occurring, then constructing an uncertainty set protecting the uncertainty in the model's output.

We set the number of training data points $\dsp N = 2000$, the number of covariates $\dsp p = 10$, as well as the parameters from the optimization model $\dsp c = 1$, $\dsp b = 6$, and $\dsp h = 2$. We consider two potential scenarios: low demand $\dsp d_{1} = 1$ and high demand $\dsp d_{2} = 10$. The distribution of the uncertain scenarios were constructed as follows: first, by generating an independent, standard normal vector of covariates $\dsp \bm{X} \in \bbR^{N \times p}$ as well as a matrix $\dsp \bm{B} \in \mathbb{R}^{p}$ where each entry follows a Bernoulli distribution that equals 1 with a probability of 0.5; next, by generating an additive noise level $\dsp \bm{\varepsilon} \in \bbR^{N}$ that is normally distributed; finally, by computing $\dsp \bm{q} = \frac{1}{\sqrt{p}} \left( \bm{X} \bm{B} \right) + \Bar{\varepsilon} \, \bm{\varepsilon}$, the low demand scenario occurs if $\dsp q_{i} \geq 0$, and the high demand scenario occurs if $\dsp q_{i} < 0$, for every data point $\dsp i \in [N]$, where  $\dsp \Bar{\varepsilon}$ is the noise level. We report results on an independent test set with  $\dsp 10,000$ samples

We vary the noise level $\dsp \Bar{\varepsilon} \in \left\{ 0, \, 0.1, \, 0.2, \, 0.5, \, 1 \right\}$ and the desired probabilistic guarantee at level $\dsp \alpha \in \left\{ 0.1, \, 0.05, \, 0.01 \right\}$.

We train a machine learning model, which predicts the probability that each scenario occurs, with the cross-entropy loss function. Motivated by Proposition \ref{dro_KLB}, we contrast our approach with that of the Kullback-Leibler $\dsp \phi$-divergence uncertainty set from Ben-Tal et al. (2013) \cite{ben2013robust}. We compare our general guarantees of Theorem \ref{general_guarantees} applied to this classification problem with their guarantees, using the values of the ``correction parameters'' $\dsp \delta_{\phi}$ and $\dsp \rho_{\phi}$ as defined on page 190 of Pardo (2018) \cite{pardo2018statistical}.

\medbreak
\noindent \textbf{Results}
\medbreak

\noindent We first report in Table \ref{results_newsvendor_RIG} the Relative Information Gain (RIG) as defined by He et al. (2014) \cite{he2014practical} for binary classification. The closer the value is to $\dsp 100\%$, the more information the machine learning model distills over a naive uniform prior. Since the $\dsp \phi$-divergence only has access to the uniform prior, this metric captures how much more knowledge our ML-based proposal leverages above the $\dsp \phi$-divergence approach.

\begin{table}[H]
\caption{Relative Information Gain (RIG) of machine learning model trained on dataset with varying noise level $\dsp \Bar{\varepsilon}$.}
\label{results_newsvendor_RIG}
\begin{center}
\begin{tabular}{|| c | c ||} 
 \hline
$\Bar{\varepsilon}$ & RIG \\
 \hline
 \hline\hline
 $0$ & $95\%$ \\ 
 \hline
 $0.1$ & $83\%$ \\ 
 \hline
 $0.2$ & $72\%$ \\ 
 \hline
 $0.5$ & $43\%$ \\ 
 \hline
 $1$ & $20\%$ \\ 
 \hline
\end{tabular}
\end{center}
\end{table}

We then compare the objective value as well as the regret achieved by each method as a function of the noise in the data $\dsp \Bar{\varepsilon}$ and the guarantee level $\dsp \alpha$. We report the trade-offs in objectives and guarantee levels in Table \ref{results_newsvendor_objectives}, and the trade-offs between regret and guarantees in Table \ref{results_newsvendor_regret}.

\begin{table}[H]
\centering
\caption{Average objective for a given guarantee level $\dsp \alpha$ and noise level $\dsp \Bar{\varepsilon}$ for different uncertainty sets.} \label{results_newsvendor_objectives}
\subfloat[$\dsp \phi$-divergence uncertainty set]{\begin{tabular}{|| c || c | c | c | c | c ||} 
 \hline
$\dsp \alpha$ $\diagup$ $\Bar{\varepsilon}$ & $0$ & $0.1$ & $0.2$ & $0.5$ & $1$ \\
 \hline
 \hline\hline
 $0.1$ & $27.9$ & $28.0$ & $28.0$ & $28.0$ & $28.0$ \\ 
 \hline
 $0.05$ & $28.0$ & $28.0$ & $28.0$ & $28.0$ & $28.0$ \\
 \hline
 $0.01$ & $28.0$ & $28.0$ & $28.0$ & $28.0$ & $28.0$ \\
 \hline
\end{tabular}}
\quad
\subfloat[Machine learning uncertainty set]{\begin{tabular}{|| c || c | c | c | c | c ||} 
 \hline
$\dsp \alpha$ $\diagup$ $\Bar{\varepsilon}$ & $0$ & $0.1$ & $0.2$ & $0.5$ & $1$ \\
 \hline
 \hline\hline
 $0.1$ & $5.4$ & $5.4$ & $5.5$ & $9.7$ & $15.8$ \\ 
 \hline
 $0.05$ & $5.4$ & $5.6$ & $7.6$ & $14.2$ & $20.0$ \\
 \hline
 $0.01$ & $5.9$ & $8.2$ & $11.7$ & $19.6$ & $25.6$ \\
 \hline
\end{tabular}}
\end{table}

\begin{table}[H]
\centering
\caption{Average regret for a given guarantee level $\dsp \alpha$ and noise level $\dsp \Bar{\varepsilon}$ for different uncertainty sets.} \label{results_newsvendor_regret}
\subfloat[$\dsp \phi$-divergence uncertainty set]{\begin{tabular}{|| c || c | c | c | c | c ||} 
 \hline
$\dsp \alpha$ $\diagup$ $\Bar{\varepsilon}$ & $0$ & $0.1$ & $0.2$ & $0.5$ & $1$ \\
 \hline
 \hline\hline
 $0.1$ & $22.1$ & $22.6$ & $22.6$ & $22.6$ & $22.6$ \\ 
 \hline
 $0.05$ & $22.6$ & $22.6$ & $22.6$ & $22.6$ & $22.6$ \\
 \hline
 $0.01$ & $22.6$ & $22.6$ & $22.6$ & $22.6$ & $22.6$ \\
 \hline
\end{tabular}}
\quad
\subfloat[Machine learning uncertainty set]{\begin{tabular}{|| c || c | c | c | c | c ||} 
 \hline
$\dsp \alpha$ $\diagup$ $\Bar{\varepsilon}$ & $0$ & $0.1$ & $0.2$ & $0.5$ & $1$ \\
 \hline
 \hline\hline
 $0.1$ & $0.4$ & $2.2$ & $4.1$ & $9.8$ & $15.9$ \\ 
 \hline
 $0.05$ & $0.4$ & $2.2$ & $4.5$ & $11.6$ & $17.4$ \\
 \hline
 $0.01$ & $0.6$ & $3.2$ & $6.8$ & $15.1$ & $20.6$ \\
 \hline
\end{tabular}}
\end{table}

Since there are plenty of data points for the radius of the $\dsp \phi$-divergence uncertainty set to be nearly $0$ for all noise levels and guarantee levels, its average objective and regret are nearly constant. Notice that for all noise and guarantee levels, our methods achieve lower values for both the objective and regret. Furthermore, as expected, as the noise level decreases, our RIG increases, meaning our models provide increasingly more information over a uniform prior. This in turn causes the objectives and regret to decrease, suggesting that our approach is most advantageous when machine learning predictions provide high signal.

\subsection{Portfolio Optimization} \label{portfolio_optimization}

\medbreak
\noindent \textbf{Experimental Setup}
\medbreak

\noindent We consider a classical portfolio optimization problem, where the objective is to maximize the returns:
\begin{align*}
    \dsp
    \min_{\bm{x} \in \bbR^{n}} \quad &-\sum_{i=1}^{n} r_{i} x_{i}, \\
    \text{s.t.} \quad &\sum_{i=1}^{n} x_{i} = 1, \\
    &\bm{x} \geq 0,
\end{align*}

\noindent where $\dsp \bm{x}$ represents the allocation of one unit of wealth between assets with return $\bm{r}$. In many applications, we do not have access to the knowledge of the exact returns $\dsp \bm{r}$, but we might have access to covariates $\dsp \bm{X}$ that help predict the returns.

The uncertain parameters' data were generated as follows: first, by generating an independent, uniformly distributed vector of covariates $\dsp \bm{X} \in \bbR^{N \times p}$ in $\dsp \left[-1, 1 \right]$ as well as a matrix $\dsp \bm{B} \in \mathbb{R}^{p}$ where each entry follows a Bernoulli distribution that equals $\dsp 1$ with a probability of $\dsp 0.5$; next, by sampling uniformly a multiplicative noise level $\dsp \bm{\varepsilon} \in \bbR^{N}$ in $\left[ 1 -
\Bar{\varepsilon}, 1 +
\Bar{\varepsilon} \right]$, where $\dsp \Bar{\varepsilon} \geq 0$; finally, by building the vector of uncertain parameters $\dsp \bm{y} = \frac{1}{\sqrt{p}} \left( \bm{X} \bm{B} \right) \cdot \bm{\varepsilon}$, which we then standardize. Notice that $\dsp \bm{\varepsilon}$ introduces some heteroscedastic noise.

For our experiments, we set the number of assets $\dsp n = 5$ and the number of covariates $\dsp p = 10$. We report results on an independent test set with  $\dsp 100$ samples. We experimented with various levels of noise $\dsp \Bar{\varepsilon} \in \left\{ 0, \, 0.1, \, 1 \right\}$ and training data points $\dsp N \in \left\{ 1000, \, 5000, \, 10000 \right\}$. We focused on probabilistic guarantees at level $\dsp \alpha \in \left\{ 0.1, \, 0.05, \, 0.01 \right\}$, using inequality (\ref{strong_guarantees_LLN}) of Theorem \ref{strong_guarantees}.

We build our uncertainty sets (\ref{uncertainty_set})  with two different loss functions $\dsp \ell$: the Mean Squared Error (MSE) and Mean Squared Error with Variance prediction (MSEV), as detailed in Section \ref{heteroscedastic_regression}. We compare our approaches against: (1) the robust ellipsoidal method that ignores the availability of covariates (Classical ellipsoidal), (2) the Smart Predict then Optimize approach from Elmachtoub and Grigas (2022) \cite{elmachtoub2022smart} (SPO), (3) the kNN approach proposed by Ohmori (2021) \cite{ohmori2021predictive} (KNN), and (4) the predict and calibrate approach proposed by Sun et al. (2023) \cite{sun2023predict} (Predict and Calibrate). For the predict-then-optimize approach, half of the training data were used to build the core prediction model and the other half were used to predict the residuals.

\medbreak
\noindent \textbf{Results}
\medbreak

\noindent We first report the average radii of uncertainty sets using the classical ellipsoidal, the MSE loss, and the MSEV loss for varying levels of probability guarantees in Table \ref{radius_portfolio}. Our methods perform much better than the classical ellipsoidal uncertainty set, achieving strong probabilistic guarantees with an average radius at times an order of magnitude smaller. As the heteroscedastic noise level $\dsp \Bar{\varepsilon}$ increases, our MSEV loss uncertainty set is able to capture the varying noise level and is therefore able to outrank our MSE loss uncertainty set. The increasing availability of data, from $\dsp N = 1000$ to $\dsp N = 5000$ benefits our methods greatly, particularly that of the MSEV loss. Intuitively, this may be because the MSEV loss relies on accurately predicting both the point prediction and the variance around that prediction.

\begin{table}[H]
\centering
\caption{Average radius of uncertainty sets for varying noise levels $\dsp \Bar{\varepsilon}$ for a given threshold probability of violation $\dsp \alpha$ for the portfolio optimization problem.} \label{radius_portfolio}
\resizebox{\textwidth}{!}{
\begin{tabular}{|| c || c | c | c || c | c | c || c | c | c ||} 
 \hline
$\Bar{\varepsilon}$ & \multicolumn{3}{c||}{$0$} & \multicolumn{3}{c|}{$0.1$}  & \multicolumn{3}{c|}{$1$} \\
 \hline
 $\dsp \alpha$ & 0.1 & 0.05 & 0.01 & 0.1 & 0.05 & 0.01 & 0.1 & 0.05 & 0.01 \\
 \hline\hline
Classical Ellipsoidal & $2.106$ & $2.402$ & $2.978$ & $2.097$ & $2.392$ & $2.965$ & $2.064$ & $2.354$ & $2.918$ \\
\hline
MSE Loss & $\bm{0.021}$ & $\bm{0.023}$ & $\bm{0.031}$ & $\bm{0.349}$ & $\bm{0.407}$ & $\bm{0.570}$ & $2.511$ & $2.939$ & $4.034$ \\
\hline
MSEV Loss & $0.342$ & $0.438$ & $0.740$ & $2.564$ & $3.324$ & $5.727$ & $\bm{1.767}$ & $\bm{2.003}$ & $\bm{2.577}$ \\
\hline
\end{tabular}}
\subcaption{$N = 1000$}
\resizebox{\textwidth}{!}{
\begin{tabular}{|| c || c | c | c || c | c | c || c | c | c ||} 
 \hline
$\Bar{\varepsilon}$ & \multicolumn{3}{c||}{$0$} & \multicolumn{3}{c|}{$0.1$}  & \multicolumn{3}{c|}{$1$} \\
 \hline
 $\dsp \alpha$ & 0.1 & 0.05 & 0.01 & 0.1 & 0.05 & 0.01 & 0.1 & 0.05 & 0.01 \\
 \hline\hline
Classical Ellipsoidal & $2.138$ & $2.438$ & $3.023$ & $2.140$ & $2.440$ & $3.026$ & $2.181$ & $2.488$ & $3.084$ \\
\hline
MSE Loss & $\bm{0.014}$ & $\bm{0.017}$ & $\bm{0.024}$ & $0.287$ & $0.329$ & $0.431$ & $2.448$ & $2.808$ & $3.677$ \\
\hline
MSEV Loss & $0.032$ & $0.041$ & $0.070$ & $\bm{0.224}$ & $\bm{0.250}$ & $\bm{0.307}$ & $\bm{1.857}$ & $\bm{2.080}$ & $\bm{2.594}$ \\
\hline
\end{tabular}}
\subcaption{$N = 5000$}
\resizebox{\textwidth}{!}{
\begin{tabular}{|| c || c | c | c || c | c | c || c | c | c ||} 
 \hline
$\Bar{\varepsilon}$ & \multicolumn{3}{c||}{$0$} & \multicolumn{3}{c|}{$0.1$}  & \multicolumn{3}{c|}{$1$} \\
 \hline
 $\dsp \alpha$ & 0.1 & 0.05 & 0.01 & 0.1 & 0.05 & 0.01 & 0.1 & 0.05 & 0.01 \\
 \hline\hline
Classical Ellipsoidal & $2.166$ & $2.470$ & $3.063$ & $2.162$ & $2.466$ & $3.057$ & $2.090$ & $2.384$ & $2.956$ \\
\hline
MSE Loss & $\bm{0.001}$ & $\bm{0.001}$ & $\bm{0.001}$ & $0.263$ & $0.307$ & $0.412$ & $2.254$ & $2.624$ & $3.516$ \\
\hline
MSEV Loss & $0.001$ & $0.001$ & $0.002$ & $\bm{0.231}$ & $\bm{0.260}$ & $\bm{0.324}$ & $\bm{1.805}$ & $\bm{2.019}$ & $\bm{2.497}$ \\
\hline
\end{tabular}}
\subcaption{$N = 10000$}
\end{table}

We report the trade-off between objective value as a function of the guarantee levels in Figure \ref{portfolio_objective_figures}, for varying levels of noise and number of data points. This reveals a consistent improvement in guarantees from our MSE and MSEV loss over the Classical Ellipsoidal. Notice that as the noise level $\dsp \Bar{\varepsilon}$ increases, these improvements diminish, since the added predictive power of machine learning is hampered by the underlying variance of the data. However, this is remedied by the increase in the number of data points $\dsp N$, unlocking more accurate predictions. Furthermore, the predicted variance of the MSEV loss can further enhance these probabilistic guarantees. Notice that Predict and Calibrate achieves low objectives, but does not offer any meaningful guarantees due to an insufficient number of data points. For completeness, we provide the predicted objective of the SPO approach, although its aim is not to predict the objective cost function accurately, but rather minimize the resulting regret. 

\begin{figure}[H]
\caption{Objective value for the portfolio optimization problem for various $\dsp N$ and $\dsp \Bar{\varepsilon}$.} \label{portfolio_objective_figures}
\begin{tabular}{ccc}
  \includegraphics[trim={0pt 0pt 150pt 0pt}, clip, width=0.29\textwidth] {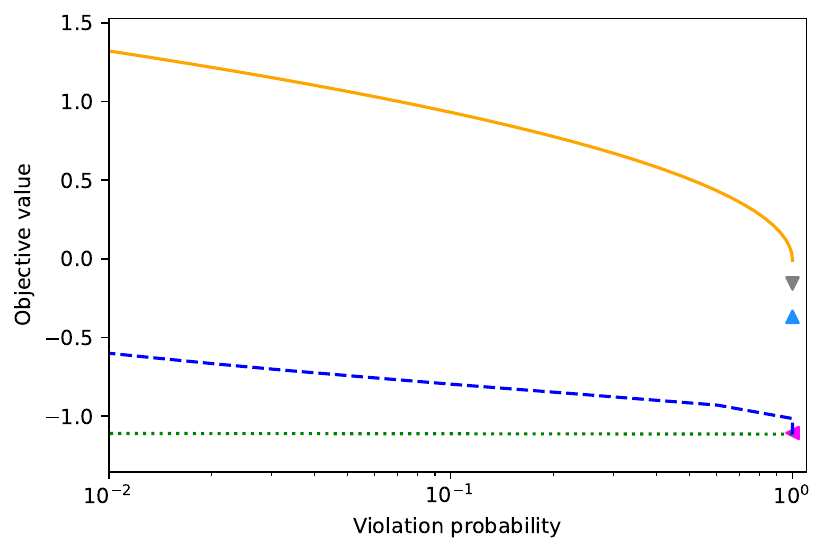} &   \includegraphics[trim={0pt 0pt 150pt 0pt}, clip, width=0.29\textwidth] {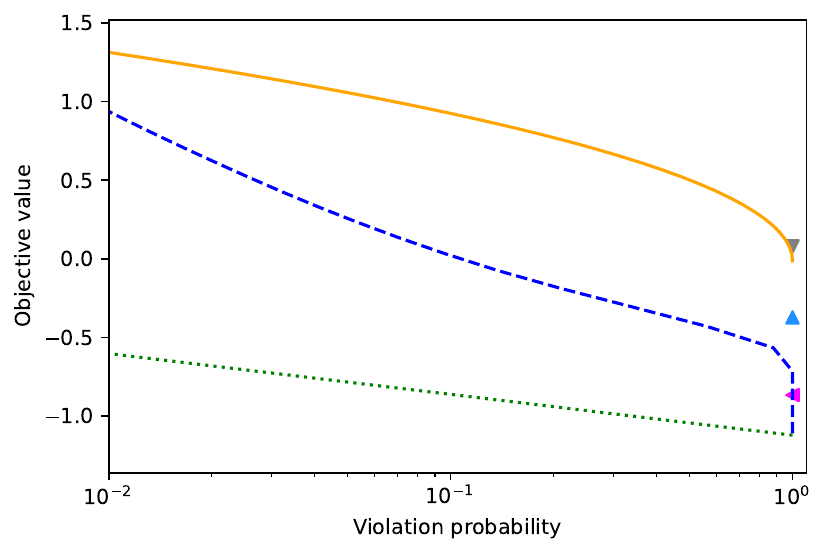} &   \includegraphics[trim={0pt 0pt 0pt 0pt}, clip, width=0.4\textwidth] {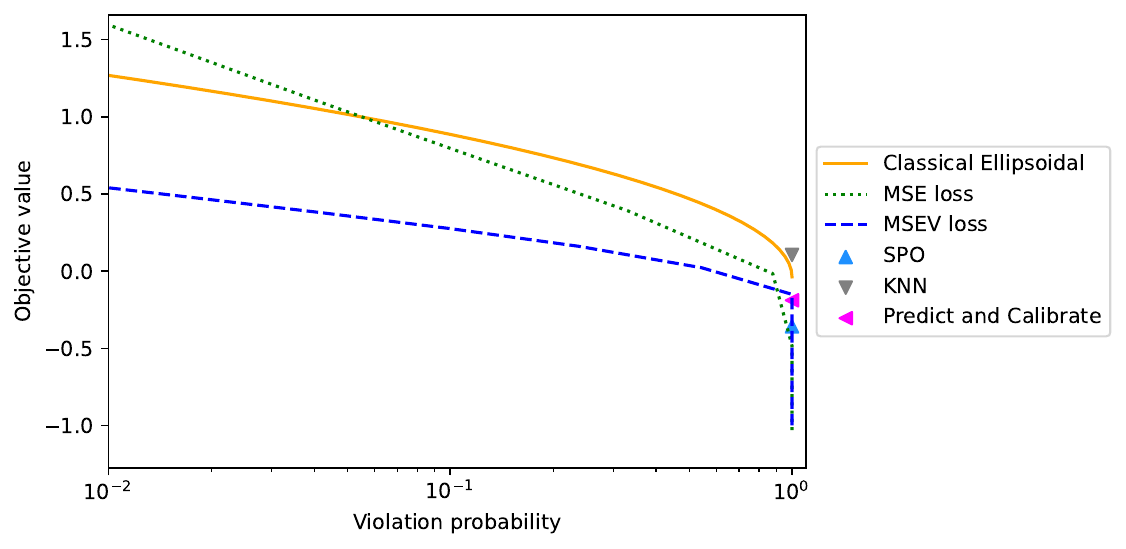}  \\
 \multicolumn{3}{c}{$\dsp N = 1000$} \\[6pt]
 \includegraphics[trim={0pt 0pt 150pt 0pt}, clip, width=0.29\textwidth] {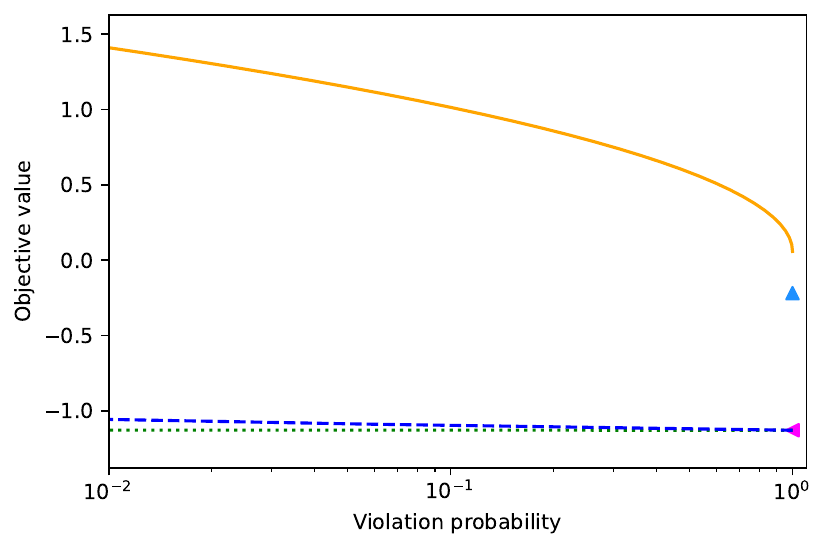} &   \includegraphics[trim={0pt 0pt 150pt 0pt}, clip, width=0.29\textwidth] {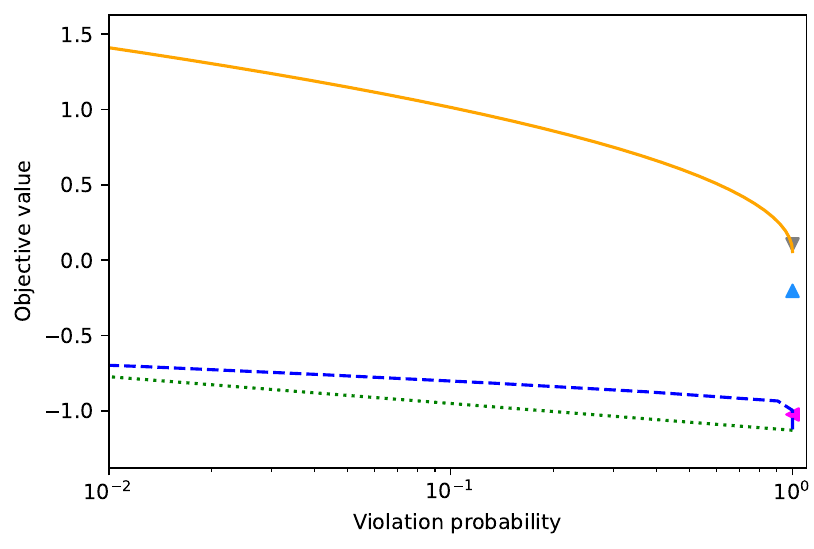} &   \includegraphics[trim={0pt 0pt 0pt 0pt}, clip, width=0.4\textwidth] {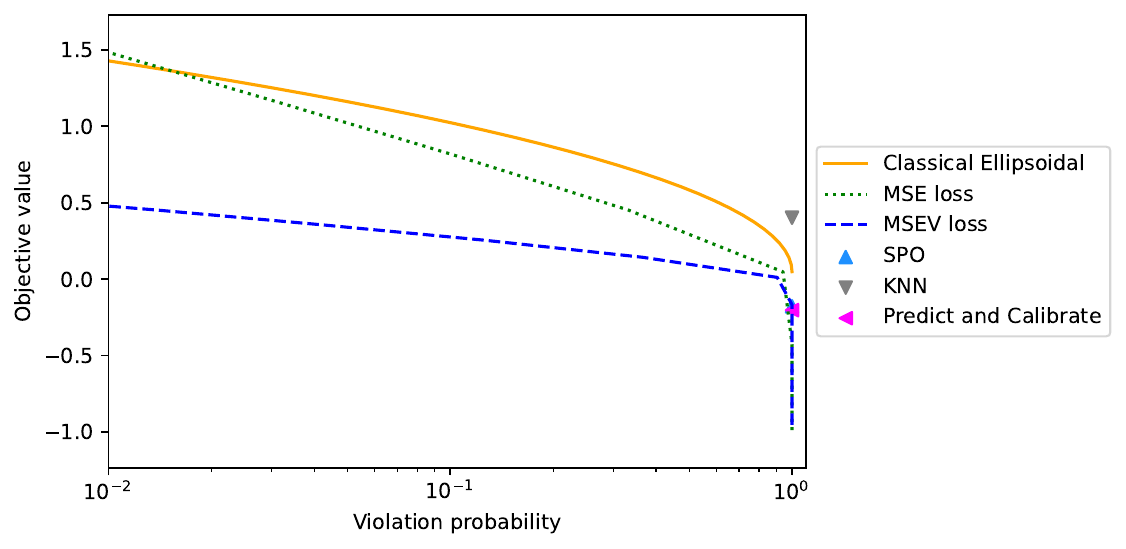} \\
 \multicolumn{3}{c}{$\dsp N = 5000$} \\[6pt]
\includegraphics[trim={0pt 0pt 150pt 0pt}, clip, width=0.29\textwidth] {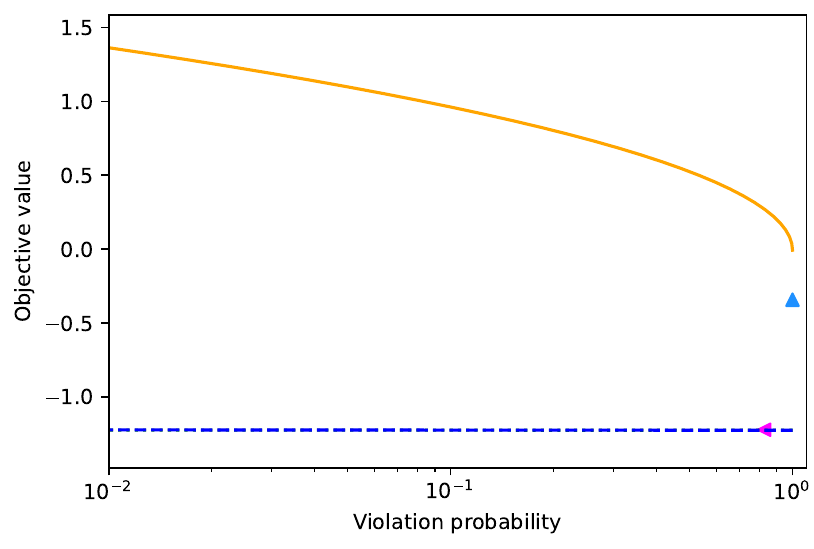} &   \includegraphics[trim={0pt 0pt 150pt 0pt}, clip, width=0.29\textwidth] {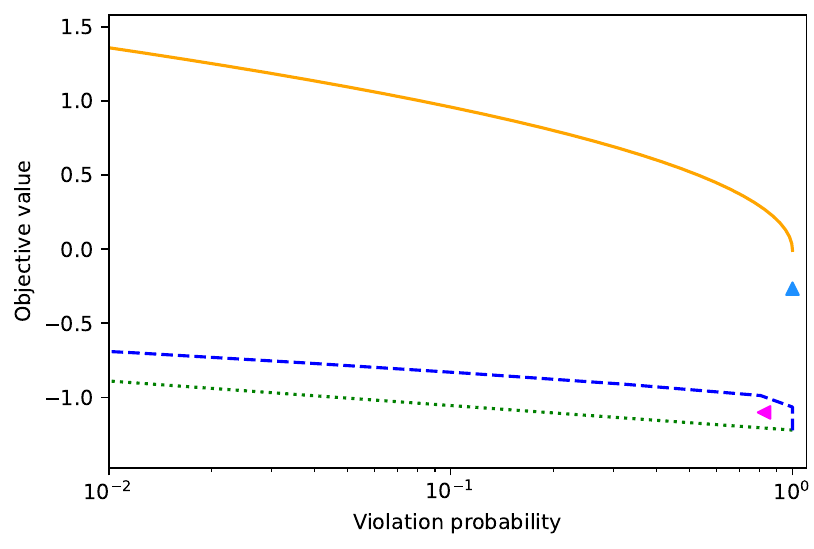} &   \includegraphics[trim={0pt 0pt 0pt 0pt}, clip, width=0.4\textwidth] {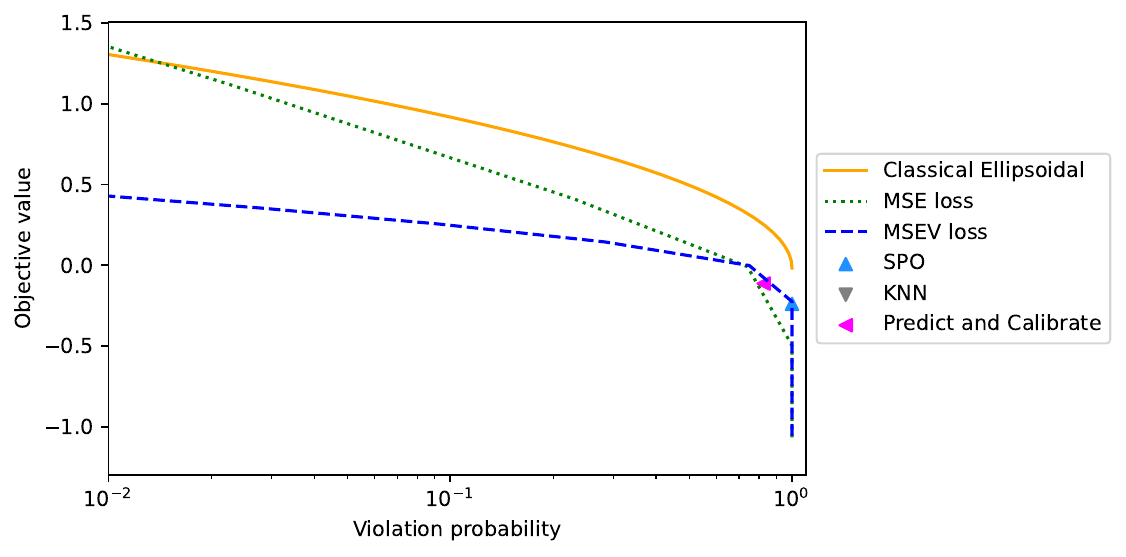} \\
 \multicolumn{3}{c}{$\dsp N = 10000$} \\[6pt]
$\dsp \Bar{\varepsilon} = 0$ & $\dsp \Bar{\varepsilon} = 0.1$ & $\dsp \Bar{\varepsilon} = 1$ \\[6pt]
\end{tabular}
\end{figure}

We report the trade-off between the regret as a function of the guarantee levels in Figure \ref{portfolio_regret_figures}. Our machine learning-based methods vastly reduce the regret compared to the classical ellipsoidal approach since the optimization model has access to estimates of the true values of the returns. Similarly to the objective value trade-offs in Figure \ref{portfolio_objective_figures}, the noise level $\dsp \Bar{\varepsilon}$ increasing diminishes the improvements of our methods over the classical ellipsoidal; however, the increase in the number of data points $\dsp N$ does not seem to play a major factor in diminishing the regret. The SPO and KNN approach are able to diminish the regret over the classical ellipsoidal approach, but not as well as our approaches or the Predict and Calibrate approach.

\begin{figure}[H]
\caption{Regret for the portfolio optimization problem for various $\dsp N$ and $\dsp \Bar{\varepsilon}$.} \label{portfolio_regret_figures}
\begin{tabular}{ccc}
  \includegraphics[trim={0pt 0pt 150pt 0pt}, clip, width=0.29\textwidth] {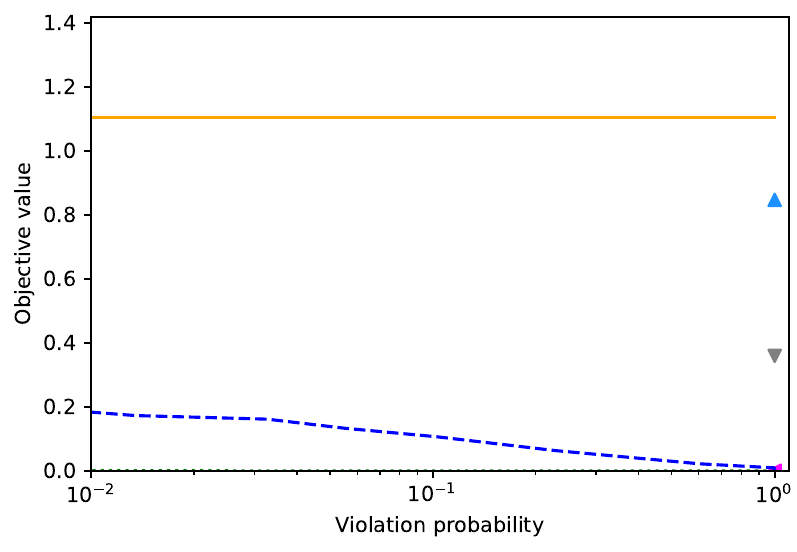} &   \includegraphics[trim={0pt 0pt 150pt 0pt}, clip, width=0.29\textwidth] {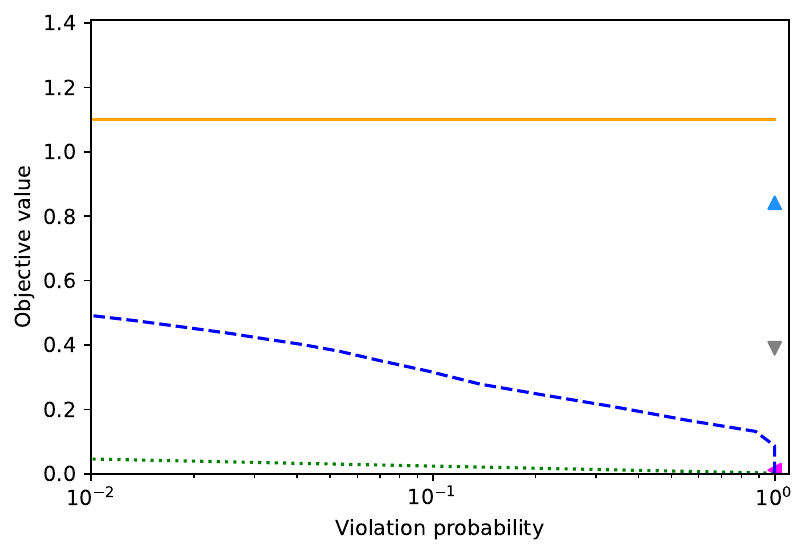} &   \includegraphics[trim={0pt 0pt 0pt 0pt}, clip, width=0.4\textwidth] {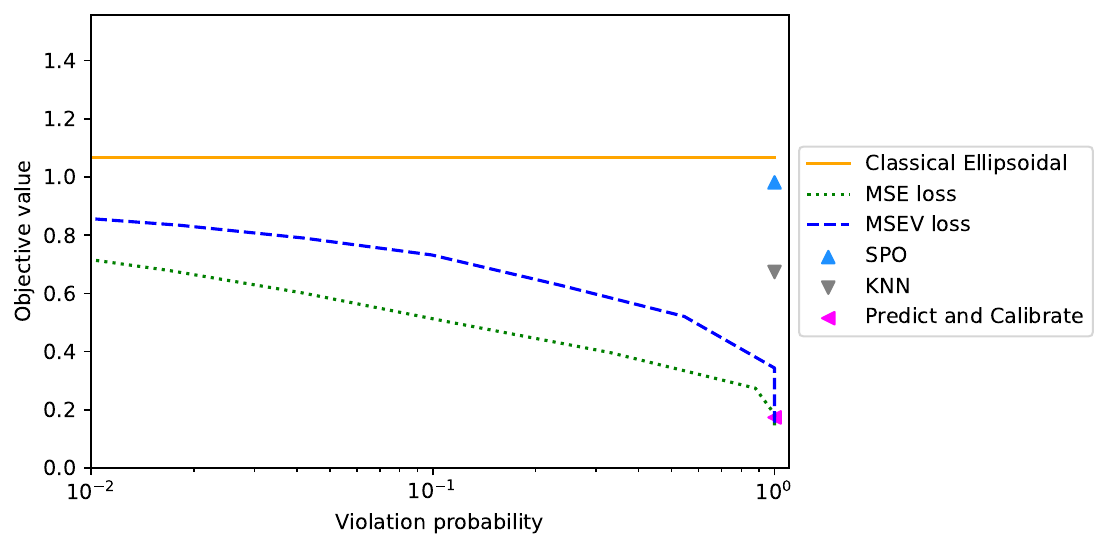}  \\
 \multicolumn{3}{c}{$\dsp N = 1000$} \\[6pt]
 \includegraphics[trim={0pt 0pt 150pt 0pt}, clip, width=0.29\textwidth] {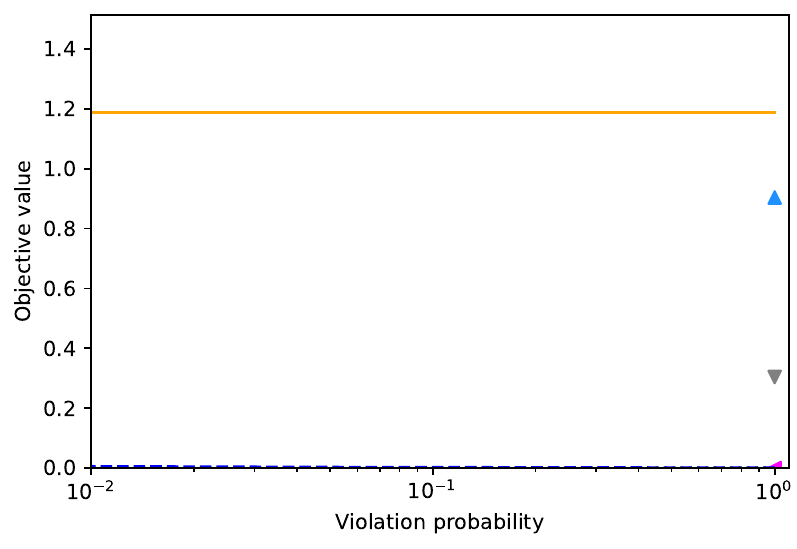} &   \includegraphics[trim={0pt 0pt 150pt 0pt}, clip, width=0.29\textwidth] {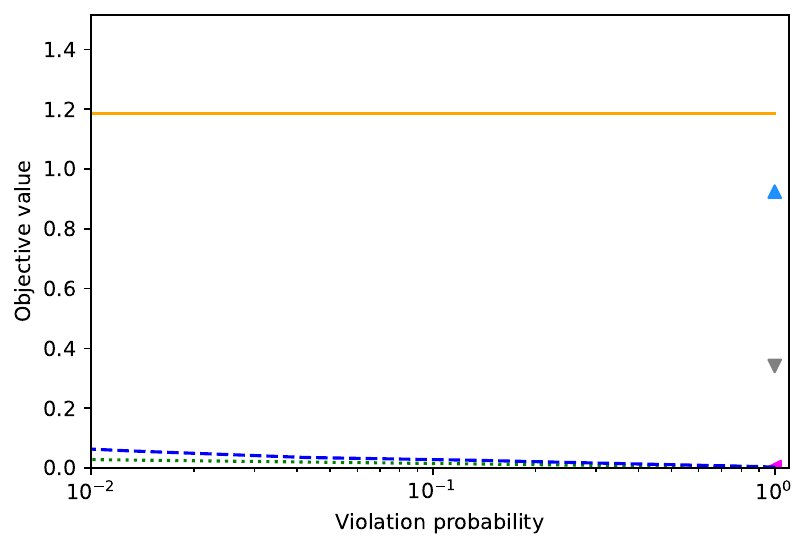} &   \includegraphics[trim={0pt 0pt 0pt 0pt}, clip, width=0.4\textwidth] {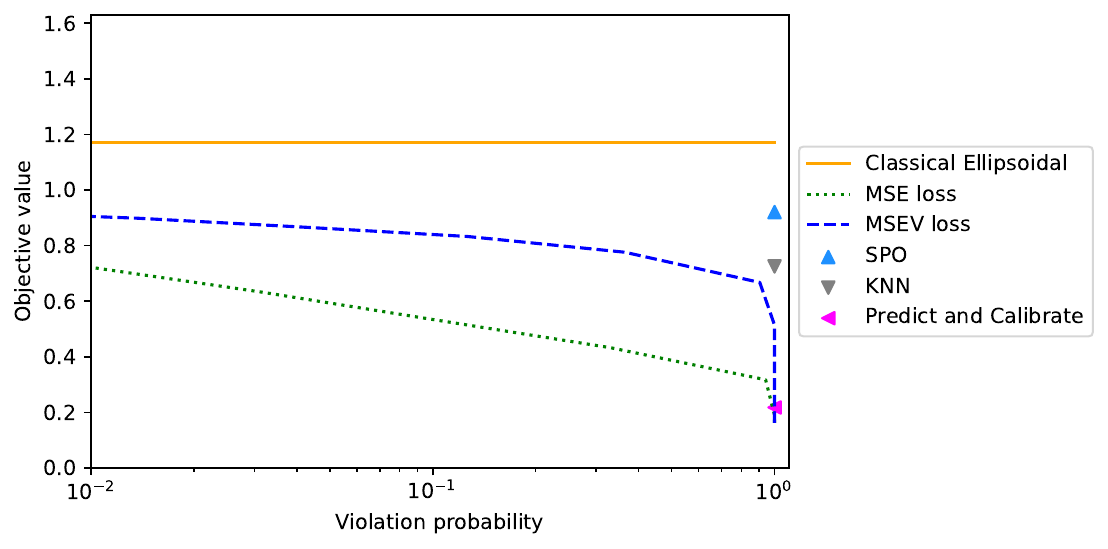} \\
 \multicolumn{3}{c}{$\dsp N = 5000$} \\[6pt]
\includegraphics[trim={0pt 0pt 150pt 0pt}, clip, width=0.29\textwidth] {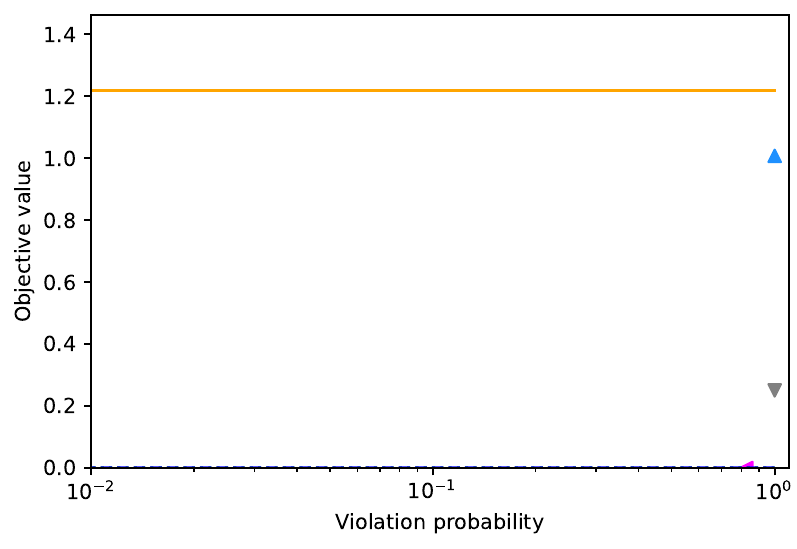} &   \includegraphics[trim={0pt 0pt 150pt 0pt}, clip, width=0.29\textwidth] {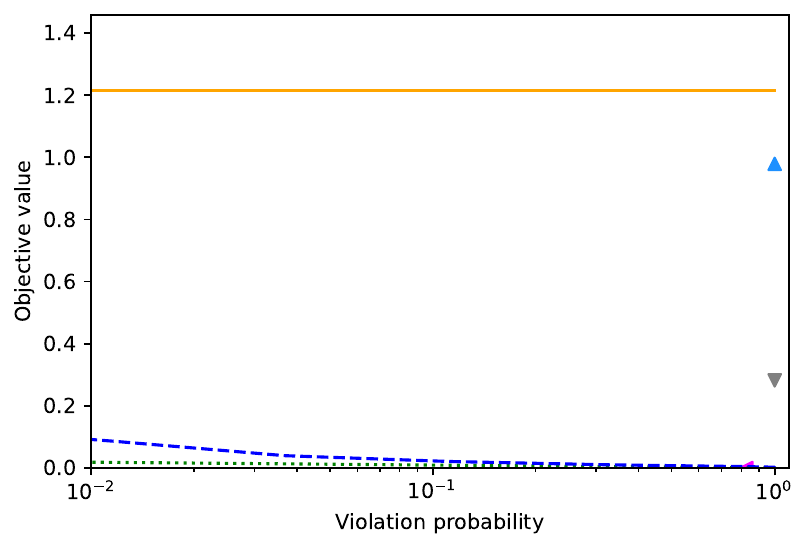} &   \includegraphics[trim={0pt 0pt 0pt 0pt}, clip, width=0.4\textwidth] {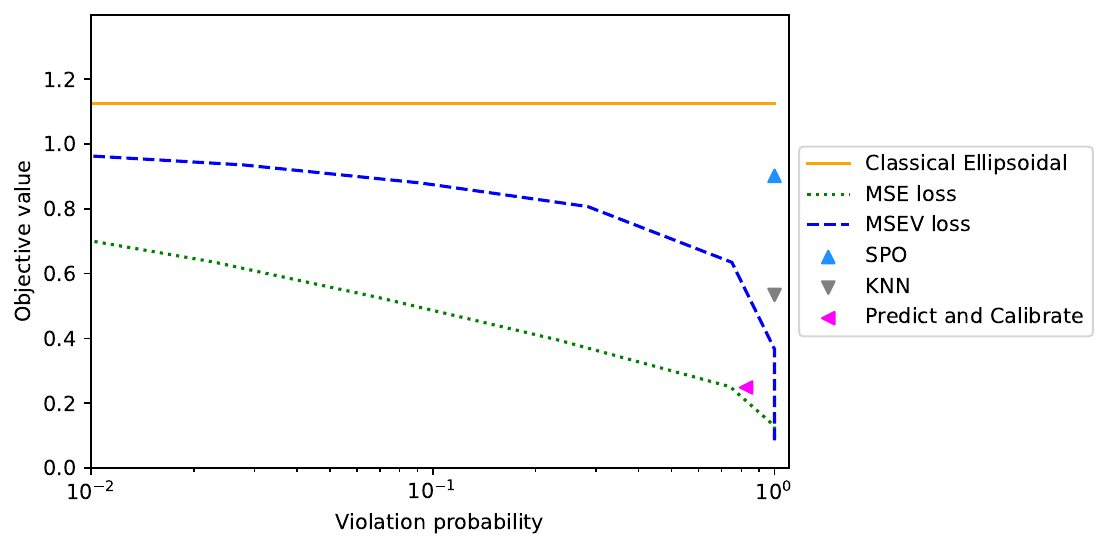} \\
 \multicolumn{3}{c}{$\dsp N = 10000$} \\[6pt]
$\dsp \Bar{\varepsilon} = 0$ & $\dsp \Bar{\varepsilon} = 0.1$ & $\dsp \Bar{\varepsilon} = 1$ \\[6pt]
\end{tabular}
\end{figure}

\subsection{Shortest Path}

We consider a shortest path problem on a $5 \times 5$ grid network, as studied by Elmachtoub and Grigas (2022) \cite{elmachtoub2022smart}. The objective is to determine the path with lowest cost originating from the northwest corner and ending up in the southeast corner, while only moving east and south along edges. Each 40 edges has a unique cost generated from our data.

The uncertain parameters' data was generated as follows: first, by generating an independent, standard normal vector of covariates $\dsp \bm{X} \in \bbR^{N \times p}$ as well as a vector $\dsp \bm{B} \in \mathbb{R}^{p}$ where each entry follows a Bernoulli distribution that equals $\dsp 1$ with a probability of $\dsp 0.5$; next, by sampling a multiplicative noise uniformly $\dsp \bm{\varepsilon} \in \bbR^{N}$ in $\left[ 1 -
\Bar{\varepsilon}, 1 +
\Bar{\varepsilon} \right]$, where $\dsp \Bar{\varepsilon} \geq 0$; finally, by building the vector of uncertain parameters $\dsp \bm{y} = \left[ \left( \frac{1}{\sqrt{p}} \left( \bm{X} \bm{B} \right) + 3 \right)^{\text{deg}} + 1 \right] \cdot \bm{\varepsilon}$ where $\dsp \text{deg}$ is a fixed positive integer, which we then standardize.

For our experiments, we set the number of covariates $\dsp p = 10$ and the number of training data points $\dsp N = 5000$. We report results on an independent test set with  $\dsp 100$ samples. We experimented with noise levels $\dsp \Bar{\varepsilon} \in \left\{ 0.1, \, 0.3, \, 0.5 \right\}$ and degree parameters $\dsp \text{deg} \in \left\{ 1, \, 2, \, 4 \right\}$. We compare the same algorithms as in the experiment in Section \ref{portfolio_optimization}.

The results are similar, and we relegate them to the Appendix.

\section{Conclusion}

This paper proposes a robust optimization approach to protect against uncertainties inherent in machine learning model outputs, within an optimization formulation. We develop an uncertainty set based on the loss function of the machine learning model, drawing from established literature on loss functions. Our method provides general guarantees for our uncertainty sets, as well as strong probabilistic guarantees for the most common regression loss function. Empirical evidence demonstrates that our approach not only enhances the guarantees found in existing literature but also provides low objective value and regret. Future work might focus on developing stronger probabilistic guarantees to a wider class of loss functions for classic optimization problems.

\bibliographystyle{plain} 
\bibliography{bibliography} 

\begin{thebibliography}{10}

\bibitem{ben2013robust}
Aharon Ben-Tal, Dick Den~Hertog, Anja De~Waegenaere, Bertrand Melenberg, and Gijs Rennen.
\newblock Robust solutions of optimization problems affected by uncertain probabilities.
\newblock {\em Management Science}, 59(2):341--357, 2013.

\bibitem{ben2015deriving}
Aharon Ben-Tal, Dick Den~Hertog, and Jean-Philippe Vial.
\newblock Deriving robust counterparts of nonlinear uncertain inequalities.
\newblock {\em Mathematical programming}, 149(1):265--299, 2015.

\bibitem{ben2000robust}
Aharon Ben-Tal and Arkadi Nemirovski.
\newblock Robust solutions of linear programming problems contaminated with uncertain data.
\newblock {\em Mathematical programming}, 88:411--424, 2000.

\bibitem{bertsimas2021probabilistic}
Dimitris Bertsimas, Dick Den~Hertog, and Jean Pauphilet.
\newblock Probabilistic guarantees in robust optimization.
\newblock {\em SIAM Journal on Optimization}, 31(4):2893--2920, 2021.

\bibitem{bertsimas2017optimal}
Dimitris Bertsimas and Jack Dunn.
\newblock Optimal classification trees.
\newblock {\em Machine Learning}, 106:1039--1082, 2017.

\bibitem{bertsimas2019robust}
Dimitris Bertsimas, Jack Dunn, Colin Pawlowski, and Ying~Daisy Zhuo.
\newblock Robust classification.
\newblock {\em INFORMS Journal on Optimization}, 1(1):2--34, 2019.

\bibitem{bertsimas2020predictive}
Dimitris Bertsimas and Nathan Kallus.
\newblock From predictive to prescriptive analytics.
\newblock {\em Management Science}, 66(3):1025--1044, 2020.

\bibitem{bishop1994mixture}
Christopher~M Bishop.
\newblock Mixture density networks.
\newblock 1994.

\bibitem{cortes1995support}
Corinna Cortes and Vladimir Vapnik.
\newblock Support-vector networks.
\newblock {\em Machine learning}, 20:273--297, 1995.

\bibitem{delage2010distributionally}
Erick Delage and Yinyu Ye.
\newblock Distributionally robust optimization under moment uncertainty with application to data-driven problems.
\newblock {\em Operations research}, 58(3):595--612, 2010.

\bibitem{elmachtoub2022smart}
Adam~N Elmachtoub and Paul Grigas.
\newblock Smart “predict, then optimize”.
\newblock {\em Management Science}, 68(1):9--26, 2022.

\bibitem{he2014practical}
Xinran He, Junfeng Pan, Ou~Jin, Tianbing Xu, Bo~Liu, Tao Xu, Yanxin Shi, Antoine Atallah, Ralf Herbrich, Stuart Bowers, et~al.
\newblock Practical lessons from predicting clicks on ads at facebook.
\newblock In {\em Proceedings of the eighth international workshop on data mining for online advertising}, pages 1--9, 2014.

\bibitem{kendall2018multi}
Alex Kendall, Yarin Gal, and Roberto Cipolla.
\newblock Multi-task learning using uncertainty to weigh losses for scene geometry and semantics.
\newblock In {\em Proceedings of the IEEE conference on computer vision and pattern recognition}, pages 7482--7491, 2018.

\bibitem{mutapcic2009cutting}
Almir Mutapcic and Stephen Boyd.
\newblock Cutting-set methods for robust convex optimization with pessimizing oracles.
\newblock {\em Optimization Methods \& Software}, 24(3):381--406, 2009.

\bibitem{nix1994estimating}
David~A Nix and Andreas~S Weigend.
\newblock Estimating the mean and variance of the target probability distribution.
\newblock In {\em Proceedings of 1994 ieee international conference on neural networks (ICNN'94)}, volume~1, pages 55--60. IEEE, 1994.

\bibitem{ohmori2021predictive}
Shunichi Ohmori.
\newblock A predictive prescription using minimum volume k-nearest neighbor enclosing ellipsoid and robust optimization.
\newblock {\em Mathematics}, 9(2):119, 2021.

\bibitem{pardo2018statistical}
Leandro Pardo.
\newblock {\em Statistical inference based on divergence measures}.
\newblock Chapman and Hall/CRC, 2018.

\bibitem{ruder2017overview}
Sebastian Ruder.
\newblock An overview of multi-task learning in deep neural networks.
\newblock {\em arXiv preprint arXiv:1706.05098}, 2017.

\bibitem{shapiro2021lectures}
Alexander Shapiro, Darinka Dentcheva, and Andrzej Ruszczynski.
\newblock {\em Lectures on stochastic programming: modeling and theory}.
\newblock SIAM, 2021.

\bibitem{skafte2019reliable}
Nicki Skafte, Martin J{\o}rgensen, and S{\o}ren Hauberg.
\newblock Reliable training and estimation of variance networks.
\newblock {\em Advances in Neural Information Processing Systems}, 32, 2019.

\bibitem{sun2023predict}
Chunlin Sun, Linyu Liu, and Xiaocheng Li.
\newblock Predict-then-calibrate: A new perspective of robust contextual lp.
\newblock {\em Advances in Neural Information Processing Systems}, 36:17713--17741, 2023.

\bibitem{tulabandhula2014robust}
Theja Tulabandhula and Cynthia Rudin.
\newblock Robust optimization using machine learning for uncertainty sets.
\newblock {\em arXiv preprint arXiv:1407.1097}, 2014.

\end{thebibliography}

\appendix

\section{Additional Results} \label{complete_results}

Similarly to the experiment from Section \ref{portfolio_optimization}, Table \ref{radius_shortest_path} shows that the radii of our machine learning based uncertainty sets shrink as the noise level $\dsp \Bar{\varepsilon}$ shrinks. The radii also shrink as the degree parameter $\dsp \text{deg}$ increases, since a high degree parameter increases the overall variance of the data distribution faster than the conditional variance given the covariates.

\begin{table}[H]
\centering
\caption{Average radius of uncertainty sets for varying noise levels $\dsp \Bar{\varepsilon}$ for a given threshold probability of violation $\dsp \alpha$ for the shortest path problem.} \label{radius_shortest_path}
\resizebox{\textwidth}{!}{
\begin{tabular}{|| c || c | c | c || c | c | c || c | c | c ||} 
 \hline
$\Bar{\varepsilon}$ & \multicolumn{3}{c||}{$0.1$} & \multicolumn{3}{c|}{$0.2$}  & \multicolumn{3}{c|}{$0.3$} \\
 \hline
 $\dsp \alpha$ & 0.1 & 0.05 & 0.01 & 0.1 & 0.05 & 0.01 & 0.1 & 0.05 & 0.01 \\
 \hline\hline
Classical Ellipsoidal & $2.133$ & $2.433$ & $3.017$ & $2.131$ & $2.430$ & $3.013$ & $\bm{2.127}$ & $\bm{2.426}$ & $\bm{3.008}$ \\
\hline
MSE Loss & $\bm{0.934}$ & $\bm{1.002}$ & $\bm{1.160}$ & $\bm{1.840}$ & $\bm{1.977}$ & $\bm{2.295}$ & $2.684$ & $2.887$ & $3.360$ \\
\hline
MSEV Loss & $1.152$ & $1.392$ & $2.095$ & $2.036$ & $2.418$ & $3.579$ & $2.904$ & $3.424$ & $5.010$ \\
\hline
\end{tabular}}
\subcaption{$\text{deg} = 1$}
\resizebox{\textwidth}{!}{
\begin{tabular}{|| c || c | c | c || c | c | c || c | c | c ||} 
 \hline
$\Bar{\varepsilon}$ & \multicolumn{3}{c||}{$0.1$} & \multicolumn{3}{c|}{$0.2$}  & \multicolumn{3}{c|}{$0.3$} \\
 \hline
 $\dsp \alpha$ & 0.1 & 0.05 & 0.01 & 0.1 & 0.05 & 0.01 & 0.1 & 0.05 & 0.01 \\
 \hline\hline
Classical Ellipsoidal & $2.068$ & $2.359$ & $2.925$ & $2.068$ & $2.359$ & $2.925$ & $2.068$ & $2.359$ & $2.925$ \\
\hline
MSE Loss & $2.750$ & $3.276$ & $4.709$ & $3.109$ & $3.701$ & $5.415$ & $5.103$ & $5.945$ & $8.321$ \\
\hline
MSEV Loss & $\bm{0.807}$ & $\bm{0.854}$ & $\bm{0.969}$ & $\bm{1.048}$ & $\bm{1.107}$ & $\bm{1.243}$ & $\bm{1.791}$ & $\bm{1.896}$ & $\bm{2.147}$ \\
\hline
\end{tabular}}
\subcaption{$\text{deg} = 2$}
\resizebox{\textwidth}{!}{
\begin{tabular}{|| c || c | c | c || c | c | c || c | c | c ||} 
 \hline
$\Bar{\varepsilon}$ & \multicolumn{3}{c||}{$0.1$} & \multicolumn{3}{c|}{$0.2$}  & \multicolumn{3}{c|}{$0.3$} \\
 \hline
 $\dsp \alpha$ & 0.1 & 0.05 & 0.01 & 0.1 & 0.05 & 0.01 & 0.1 & 0.05 & 0.01 \\
 \hline\hline
Classical Ellipsoidal & $1.787$ & $2.039$ & $2.528$ & $1.809$ & $2.063$ & $2.558$ & $1.828$ & $2.085$ & $2.585$ \\
\hline
MSE Loss & $6.429$ & $8.043$ & $13.111$ & $6.506$ & $8.183$ & $13.509$ & $9.294$ & $11.435$ & $17.991$ \\
\hline
MSEV Loss & $\bm{0.480}$ & $\bm{0.513}$ & $\bm{0.598}$ & $\bm{0.713}$ & $\bm{0.757}$ & $\bm{0.861}$ & $\bm{1.189}$ & $\bm{1.266}$ & $\bm{1.454}$ \\
\hline
\end{tabular}}
\subcaption{$\text{deg} = 4$}
\end{table}

Even when the radius of the classical ellipsoidal is smaller than that of our machine learning based sets, our methods achieve lower objectives and regret as can be observed in Figure \ref{shortest_path_objective_figures} and Figure \ref{shortest_path_regret_figures}.

\begin{figure}[H]
\caption{Objective value for the shortest path problem for various $\dsp \text{deg}$ and $\dsp \Bar{\varepsilon}$.} \label{shortest_path_objective_figures}
\begin{tabular}{ccc}
  \includegraphics[trim={0pt 0pt 150pt 0pt}, clip, width=0.29\textwidth] {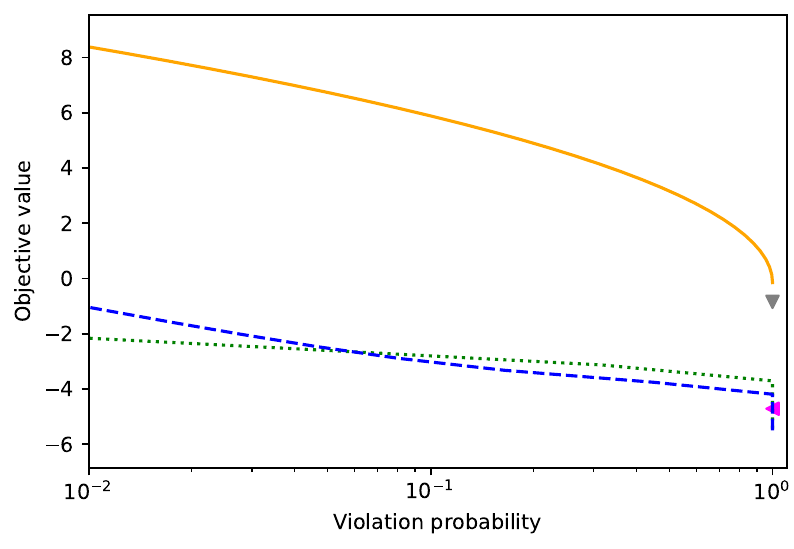} &   \includegraphics[trim={0pt 0pt 150pt 0pt}, clip, width=0.29\textwidth] {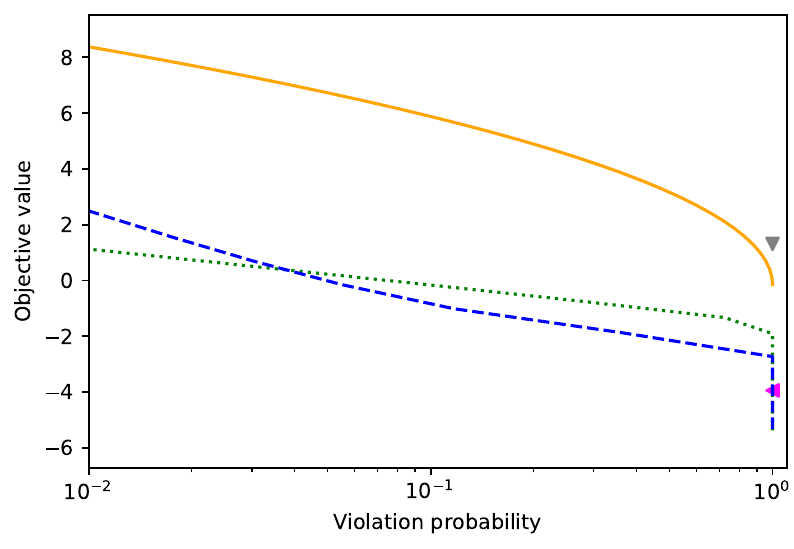} &   \includegraphics[trim={0pt 0pt 0pt 0pt}, clip, width=0.4\textwidth] {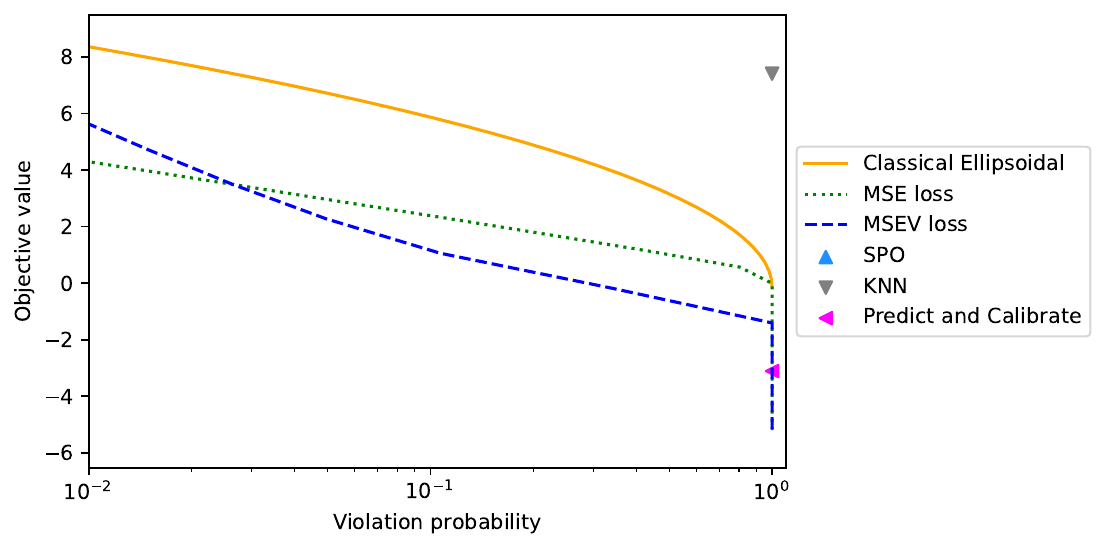}  \\
 \multicolumn{3}{c}{$\text{deg} = 1$} \\[6pt]
 \includegraphics[trim={0pt 0pt 150pt 0pt}, clip, width=0.29\textwidth] {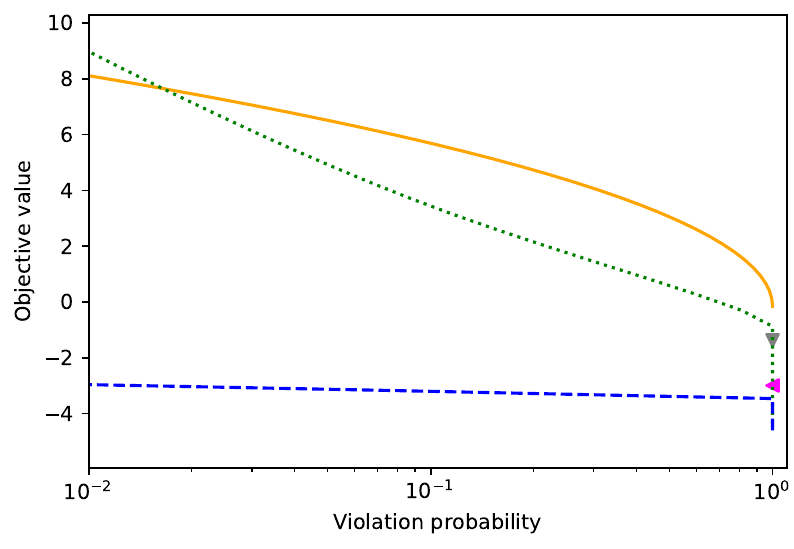} &   \includegraphics[trim={0pt 0pt 150pt 0pt}, clip, width=0.29\textwidth] {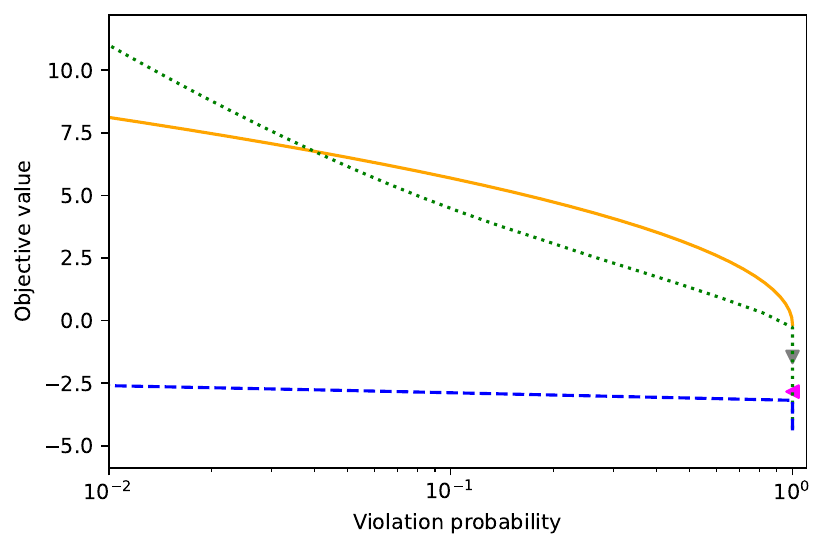} &   \includegraphics[trim={0pt 0pt 0pt 0pt}, clip, width=0.4\textwidth] {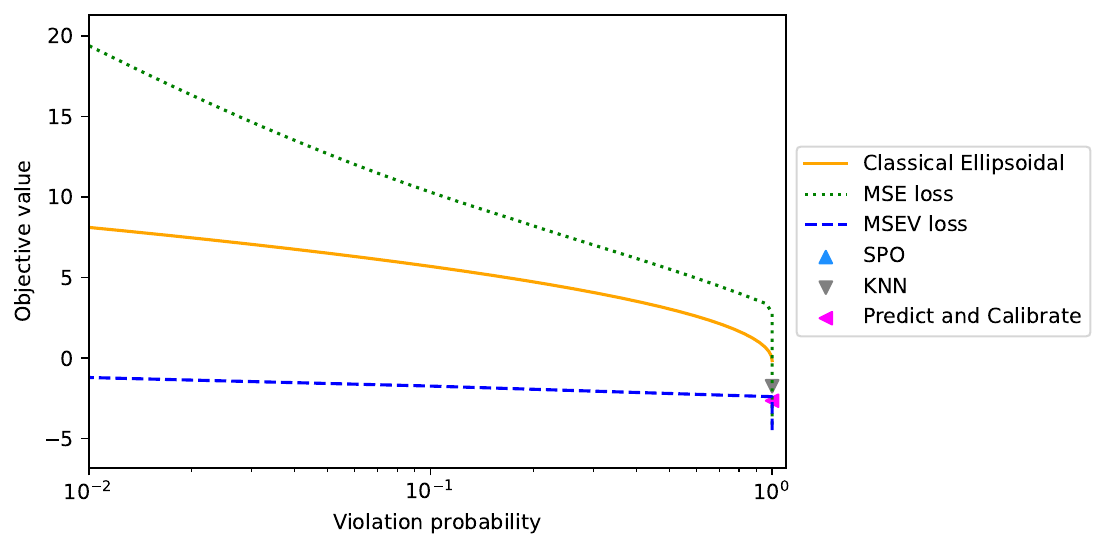} \\
 \multicolumn{3}{c}{$\text{deg} = 2$} \\[6pt]
\includegraphics[trim={0pt 0pt 150pt 0pt}, clip, width=0.29\textwidth] {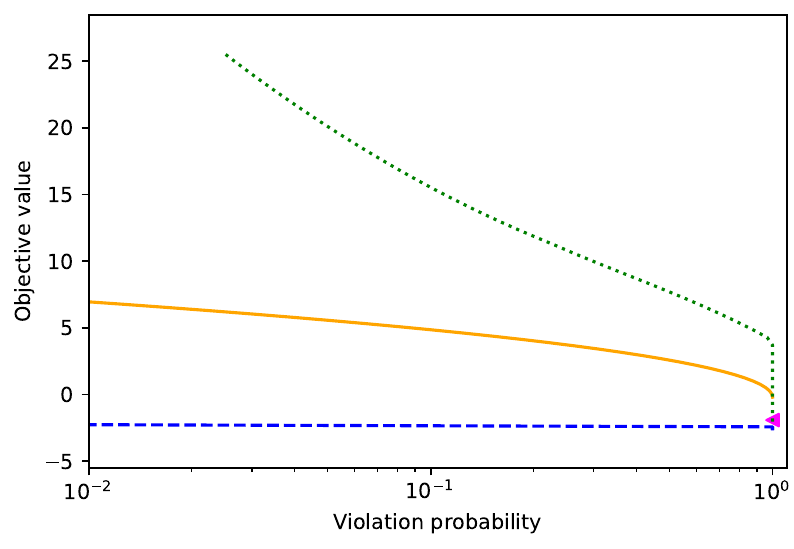} &   \includegraphics[trim={0pt 0pt 150pt 0pt}, clip, width=0.29\textwidth] {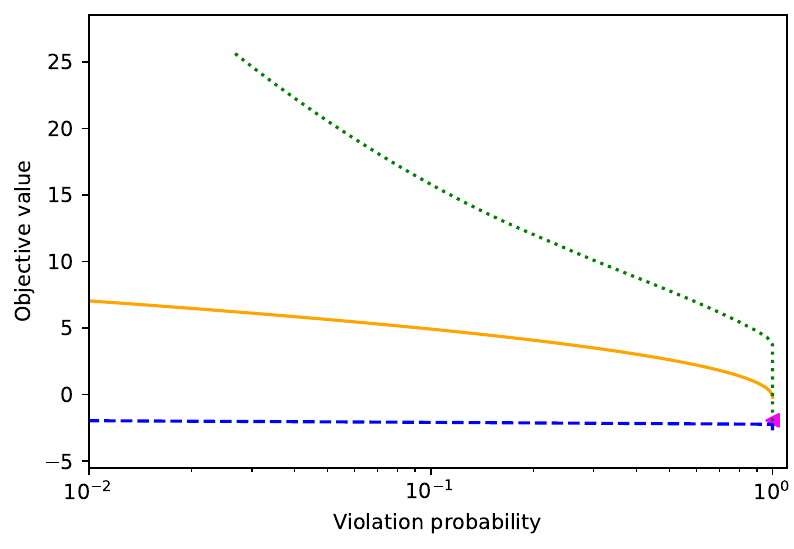} &   \includegraphics[trim={0pt 0pt 0pt 0pt}, clip, width=0.4\textwidth] {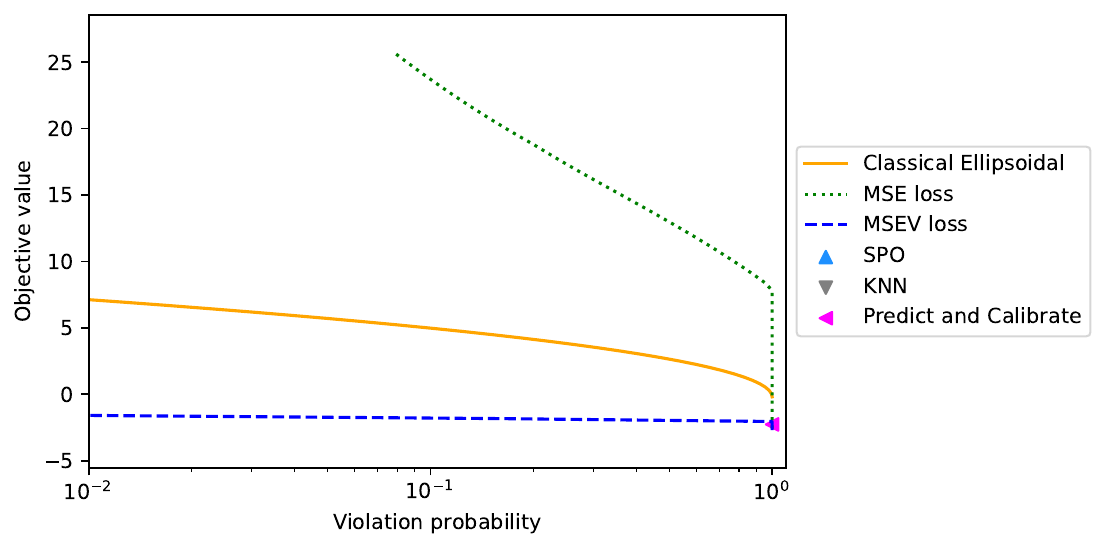} \\
 \multicolumn{3}{c}{$\text{deg} = 4$} \\[6pt]
$\dsp \Bar{\varepsilon} = 0.1$ & $\dsp \Bar{\varepsilon} = 0.3$ & $\dsp \Bar{\varepsilon} = 0.5$ \\[6pt]
\end{tabular}
\end{figure}

\begin{figure}[H]
\caption{Regret for the shortest path problem for various $\dsp \text{deg}$ and $\dsp \Bar{\varepsilon}$.} \label{shortest_path_regret_figures}
\begin{tabular}{ccc}
  \includegraphics[trim={0pt 0pt 150pt 0pt}, clip, width=0.29\textwidth] {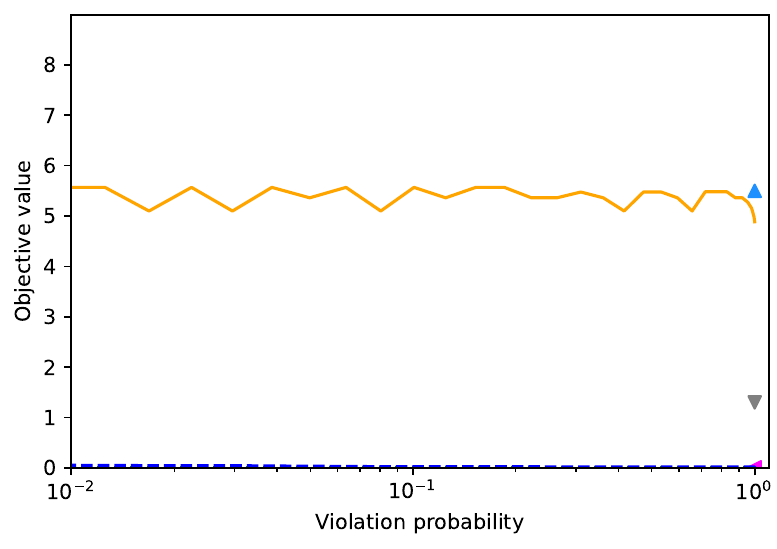} &   \includegraphics[trim={0pt 0pt 150pt 0pt}, clip, width=0.29\textwidth] {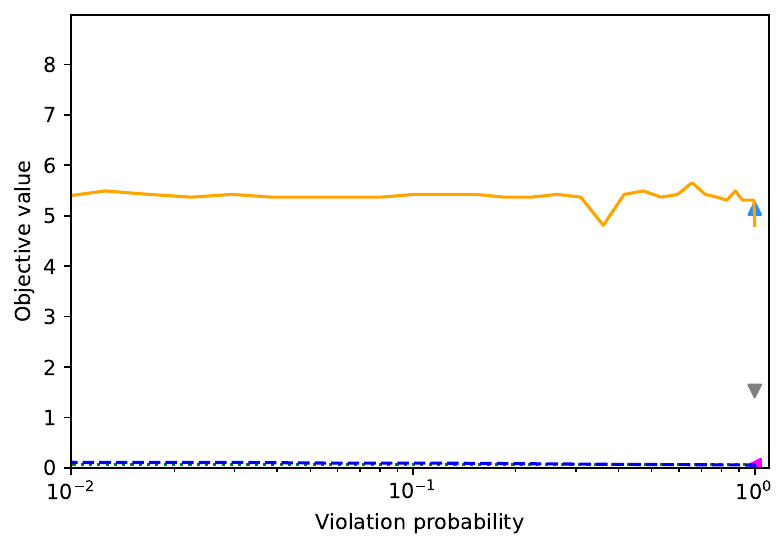} &   \includegraphics[trim={0pt 0pt 0pt 0pt}, clip, width=0.4\textwidth] {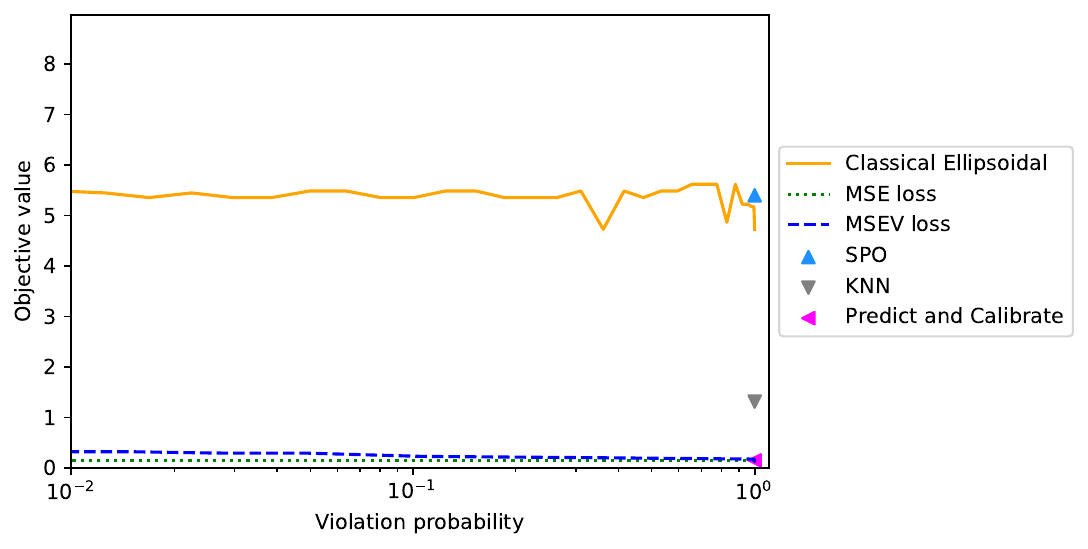}  \\
 \multicolumn{3}{c}{$\text{deg} = 1$} \\[6pt]
 \includegraphics[trim={0pt 0pt 150pt 0pt}, clip, width=0.29\textwidth] {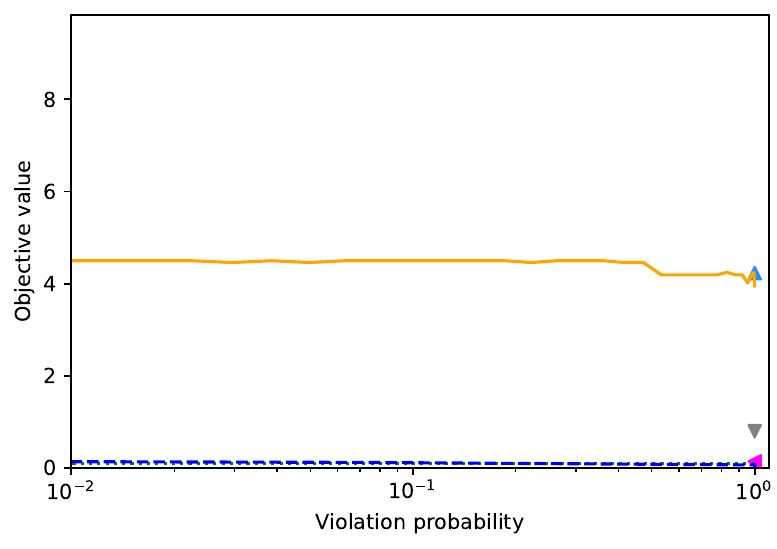} &   \includegraphics[trim={0pt 0pt 150pt 0pt}, clip, width=0.29\textwidth] {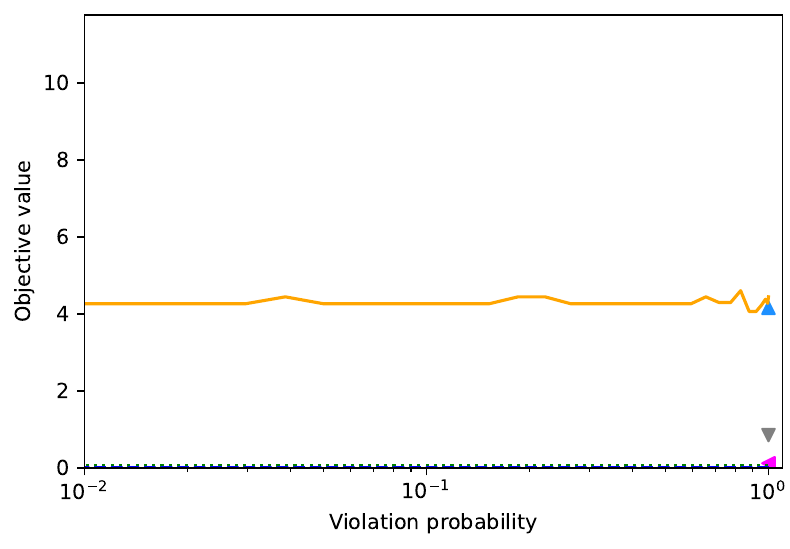} &   \includegraphics[trim={0pt 0pt 0pt 0pt}, clip, width=0.4\textwidth] {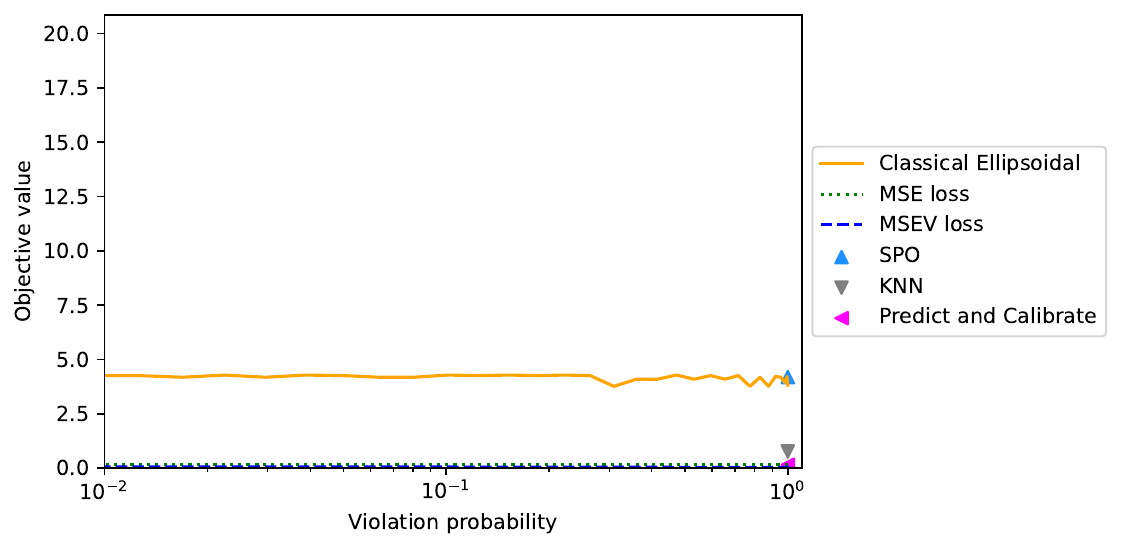} \\
 \multicolumn{3}{c}{$\text{deg} = 2$} \\[6pt]
\includegraphics[trim={0pt 0pt 150pt 0pt}, clip, width=0.29\textwidth] {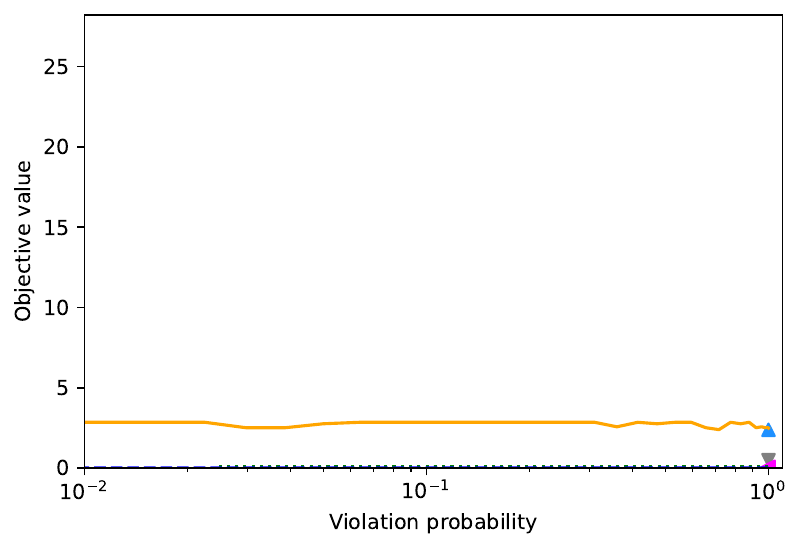} &   \includegraphics[trim={0pt 0pt 150pt 0pt}, clip, width=0.29\textwidth] {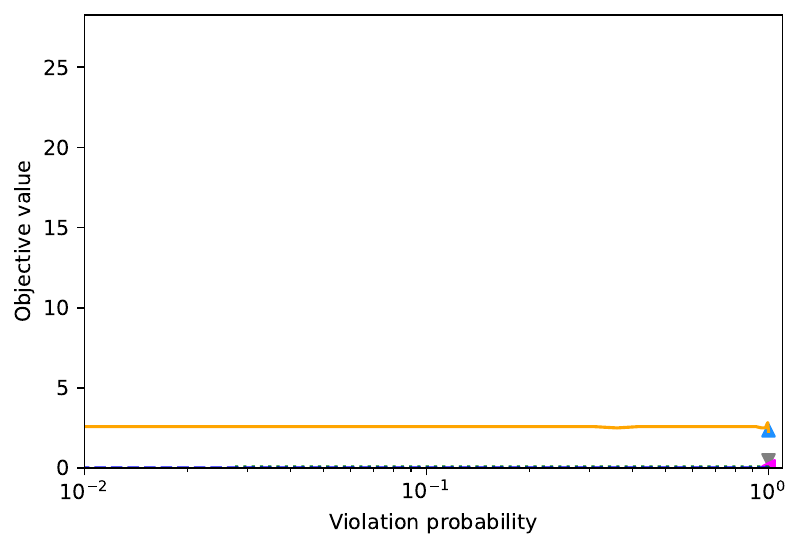} &   \includegraphics[trim={0pt 0pt 0pt 0pt}, clip, width=0.4\textwidth] {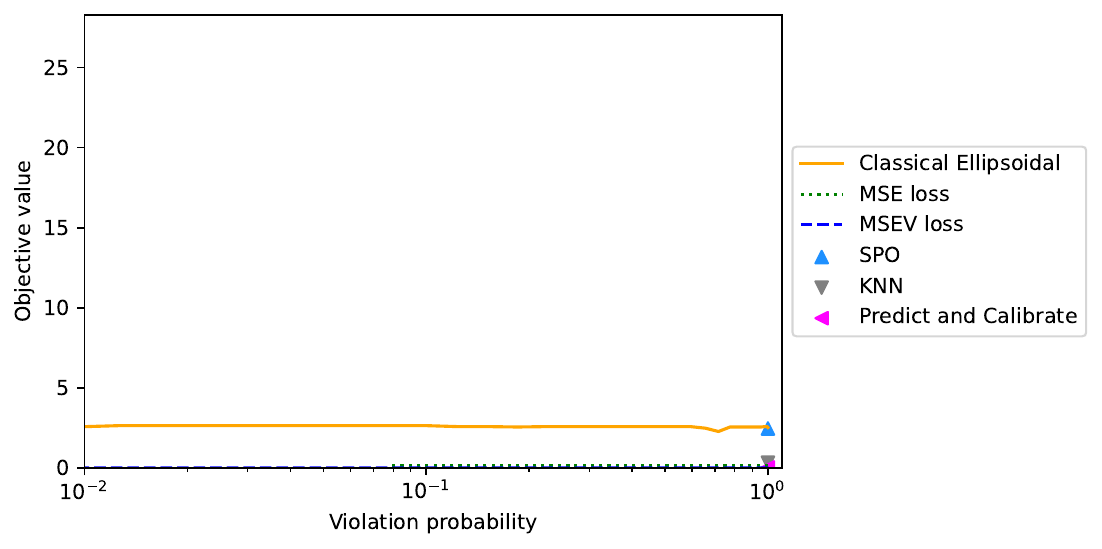} \\
 \multicolumn{3}{c}{$\text{deg} = 4$} \\[6pt]
$\dsp \Bar{\varepsilon} = 0.1$ & $\dsp \Bar{\varepsilon} = 0.3$ & $\dsp \Bar{\varepsilon} = 0.5$ \\[6pt]
\end{tabular}
\end{figure}

\end{document}